\documentclass{article}

\usepackage[main, final]{neurips_2026}
\setcitestyle{numbers,square,comma}

\usepackage[utf8]{inputenc}
\usepackage[T1]{fontenc}
\usepackage{hyperref}
\hypersetup{hidelinks}
\usepackage{url}
\usepackage{booktabs}
\usepackage{multirow}
\usepackage{amsfonts}
\usepackage{nicefrac}
\usepackage{microtype}
\usepackage{xcolor}
\usepackage{amsmath}
\usepackage{amssymb}
\usepackage{mathtools}
\usepackage{amsthm}
\usepackage{xcolor-material}
\usepackage{subcaption}
\usepackage{bbm}
\usepackage{tikz}
\usetikzlibrary{arrows.meta}

\definecolor{commentblue}{HTML}{598be7}
\definecolor{TrainingColor}{HTML}{9B4F96}

\usepackage{algorithm}
\usepackage[endLComment={},commentColor=commentblue]{algpseudocodex}

\usepackage[percent]{overpic}
\usepackage{enumitem}

\setlist[itemize]{leftmargin=*, labelindent=0pt}
\setlist[enumerate]{leftmargin=*, labelindent=0pt}

\usepackage{stmaryrd}
\usepackage[pgf]{trimclip}

\usepackage{centernot}

\makeatletter
\DeclareRobustCommand{\shortto}{\mathrel{\mathpalette\short@to\relax}}

\newcommand{\short@to}[2]{\mkern2mu
  \clipbox{{.4\width} 0 0 0}{$\m@th#1\vphantom{+}{\shortrightarrow}$}}
\makeatother

\let\emailttdefault\ttdefault
\DeclareTextFontCommand{\emailtexttt}{\fontfamily{\emailttdefault}\selectfont}
\usepackage[scaled=0.88]{beramono}

\usepackage{cleveref}

\theoremstyle{plain}
\newtheorem{theorem}{Theorem}[section]

\newtheorem{lemma}[theorem]{Lemma}

\theoremstyle{definition}

\theoremstyle{remark}
\newtheorem{remark}[theorem]{Remark}

\def\eqref#1{equation~\ref{#1}}
\newcommand{\E}{\mathbb{E}}
\newcommand{\KL}{D_{\mathrm{KL}}}
\DeclareMathOperator*{\argmax}{arg\,max}

\definecolor{plgray}{HTML}{888888}
\DeclareMathOperator{\supp}{supp}

\newcommand{\pl}[1]{{\color{plgray} #1}}
\newcommand{\algo}{\texttt{LASER}}
\newcommand{\ourname}[0]{\textcolor{MaterialBlue900}{{\algo{}}}}
\newcommand{\baselinename}[1]{\textcolor{MaterialPurple900}{\texttt{#1}}}
\newcommand{\datasetname}[1]{{\textsc{#1}}}
\newcommand{\envname}[1]{{\fontfamily{lmss}\selectfont#1}}
\renewcommand{\eqref}[1]{\textup{(\ref{#1})}}

\definecolor{BCBlue}{HTML}{3D85BF}
\definecolor{NGGrey}{HTML}{AEAEAE}
\definecolor{FTPurple}{HTML}{7E35C5}
\DeclareRobustCommand{\bcarrow}{\tikz[baseline=-0.7ex] \draw[-{Stealth[inset=0pt, length=5pt, width=4pt]}, color=BCBlue, line width=1.2pt] (0,0) -- (0.5,0);}
\DeclareRobustCommand{\ngarrow}{\tikz[baseline=-0.7ex] \draw[-{Stealth[inset=0pt, length=5pt, width=4pt]}, color=NGGrey, line width=1.2pt] (0,0) -- (0.5,0);}
\DeclareRobustCommand{\amarrow}{\tikz[baseline=-0.7ex] \draw[-{Stealth[inset=0pt, length=5pt, width=4pt]}, color=FTPurple, line width=1.2pt] (0,0) -- (0.5,0);}

\DeclarePairedDelimiter{\norm}{\lVert}{\rVert}

\newcommand*\diff{\mathop{}\!\mathrm{d}}

\newcommand{\Gaussian}{\mathcal{N}}
\newcommand{\Uniform}{\mathcal{U}}

\newcommand{\ActionSpace}{\mathcal{A}}
\newcommand{\LatentSpace}{\mathcal{L}}
\newcommand{\NoiseSpace}{\mathcal{B}}

\newcommand{\NToL}{{\textsf{B} \shortto \textsf{L}}}
\newcommand{\LToA}{{\textsf{L} \shortto \textsf{A}}}

\newcommand{\BaseVelocity}{v_{\NToL}}

\newcommand{\BehaviorPol}{\pi_\beta}

\newcommand{\LatentBCDist}{q_\textsf{L}}
\newcommand{\LatentPolicy}{\pi_\textsf{L}}
\newcommand{\LatentUniform}{q_\textsf{L}^\mathrm{unif}}
\newcommand{\LatentUniformBold}{\mathbf{q}_\textsf{\textbf{L}}^\mathrm{\mathbf{unif}}}
\newcommand{\FullPolicy}{\pi_\textsf{A}}
\newcommand{\BaseDist}{q_\textsf{B}}

\newcommand{\thetaNToL}{{\theta_{\textsf{B} \shortto \textsf{L}}}}
\newcommand{\thetaLToA}{{\theta_{\textsf{L} \shortto \textsf{A}}}}
\newcommand{\thetaLToAPrime}{{\theta'_{\textsf{L} \shortto \textsf{A}}}}

\newcommand{\muNToLBase}{{\mu_{\NToL}^{\mathrm{base}}}}
\newcommand{\vNToLBase}{{v_{\NToL}^{\mathrm{base}}}}
\newcommand{\psiNToLBase}{{\psi_{\NToL}^{\mathrm{base}}}}

\newcommand{\muNToL}{{\mu_{\thetaNToL}}}
\newcommand{\muNToLCondS}{{\mu_{\thetaNToL}^s}}

\newcommand{\muLToA}{{\mu_{\thetaLToA}}}
\newcommand{\muLToACondS}{{\mu_{\thetaLToA}^s}}

\title{LASER: Latent Space Adjoint Matching for Support-Constrained Entropy-Regularized Offline RL}

\author{Songyuan Zhang\textsuperscript{1} \quad
  Oswin So\textsuperscript{1} \quad
  Eric Yang Yu\textsuperscript{1} \quad
  Matthew Cleaveland\textsuperscript{2} \quad
  \\
  \bfseries Peter Crowley-Dolen\textsuperscript{2} \quad
  Chuchu Fan\textsuperscript{1} \\[0.5ex]
  \normalfont\textsuperscript{1}MIT \qquad
  \textsuperscript{2}MIT Lincoln Laboratory \\[0.5ex]
  \emailtexttt{\{szhang21,oswinso,eyyu,chuchu\}@mit.edu} \\
  \emailtexttt{\{matthew.cleaveland,Peter.Crowley-Dolen\}@ll.mit.edu}
}

\begin{document}

\maketitle

\begingroup
\renewcommand{\thefootnote}{}
\footnotetext[0]{DISTRIBUTION STATEMENT A. Approved for public release. Distribution is unlimited.}
\endgroup

\begin{abstract}
    While offline reinforcement learning (RL) enables policy optimization from static datasets without costly online interaction, it remains bottlenecked by the risk of executing out-of-distribution (OOD) actions.
    Recent approaches mitigate this by learning a behavior-cloning policy through flow matching and then performing RL within its constrained latent space.
    However, na\"ively optimizing the latent policy can easily cause the policy to collapse into a brittle mode or exploit sharp artifacts of the learned critic.
    In this work, we find that entropy regularization is essential in latent-space RL for addressing these challenges.
    We introduce \algo{}, a novel offline RL algorithm that applies latent-space adjoint matching to achieve entropy-regularized latent-space RL with expressive flow policies while avoiding backpropagation through time.
    Through comprehensive experiments on 40 challenging OGBench tasks with varying dataset qualities, we show that \algo{} achieves state-of-the-art performance. Notably, \algo{} uses
    fixed method-specific hyperparameters across all tasks and outperforms the evaluated baselines, including those with task- and dataset-specific tuning, which highlights the robust applicability of \algo{}.
    Project website: \url{https://mit-realm.github.io/laser/}.
\end{abstract}

\section{Introduction}
Offline reinforcement learning (RL) aims to learn policies that improve upon the behavior policy using only a static dataset generated by that behavior policy, without further environment interaction \citep{levine2020offline}.
This creates a central tension: improvement requires deviating from the behavior policy while choosing actions where the dataset provides reliable supervision.
For a fixed state, actions outside the dataset support may receive spuriously high predicted values, which can be exploited by policy optimization to yield out-of-distribution (OOD) actions that perform poorly when deployed \citep{levine2020offline}.

A major line of work regularizes a statistical distance between the learned and behavior policies to mitigate OOD actions~\citep{wang2018exponentially, peng2019advantage, mao2023str, kumar2019stabilizing, wu2019behavior}.
However, depending on the distance used, these methods can be either too restrictive and produce an overly conservative policy (e.g., KL divergence), or not restrictive enough and still produce OOD actions (e.g., Wasserstein distance, MMD distance) \citep{zhang2026reform}.

This has motivated a more direct alternative: enforce \textbf{support constraints} on the policy extraction process to guarantee that the actions produced lie within the support of the action distribution of the behavior policy \citep{wagenmaker2025steering,zhang2026reform}.
In particular, these methods learn a flow-based generative model using flow matching \citep{lipman2023flow,liu2023flow,albergo2022building} to imitate the behavior policy, resulting in a compact latent space that maps to the behavior policy.
Assuming the decoder maps the latent prior's support into the behavior policy's support, any latent policy restricted to that support produces in-support actions, while allowing their probabilities to differ from those of the behavior policy.
Policy optimization performed on this latent space, either via gradient descent (as in \citep{zhang2026reform}) or SAC \citep{haarnoja2018soft} (as in \citep{wagenmaker2025steering}), yields a policy that satisfies support constraints by construction.

In this work, we argue that support constraints are only part of the story.
Once policy optimization is restricted to the support, the central difficulty becomes policy extraction: how should one reliably select high-value actions from the support without collapsing to brittle modes or exploiting sharp artifacts of the learned critic?
Even when actions remain in support, the critic may be sharp, irregular, or locally overestimated, and a low-entropy extraction procedure can collapse onto brittle modes.

We study \textbf{entropy regularization} as an explicit objective for support-constrained flow policy optimization.
While entropy regularization is well studied in online RL for its role in improving exploration \citep{ziebart2010modeling,haarnoja2018soft} and robustness \citep{ziebart2010modeling,eysenbach2022maximum}, its role in offline RL, and in particular in support-constrained policy extraction, has not been isolated and made explicit.
We argue that entropy regularization is also crucial in offline RL, not only in discouraging premature collapse of the policy to local minima, but also in smoothing the policy optimization landscape by regularizing the critic \citep{ahmed2019understanding}.

To this end, we propose \textbf{\texttt{L}}atent Space \textbf{\texttt{A}}djoint Matching for \textbf{\texttt{S}}upport-Constrained \textbf{\texttt{E}}ntropy-\textbf{\texttt{R}}egularized Offline RL (\textbf{\texttt{LASER}}),
a framework for entropy-regularized support-constrained policy extraction with latent-space flow policies.
\algo{} builds on the latent support parameterization used in recent flow-based offline RL methods \citep{zhang2026reform,wagenmaker2025steering},
but replaces implicit or architecture-dependent regularization with an explicit entropy-regularized policy improvement objective.
The main technical challenge is that computing and optimizing the entropy of a flow policy in latent space is nontrivial.
We address this using \textbf{adjoint matching} \citep{domingo2024adjoint},
which enables entropy-regularized optimization of expressive flow policies \textit{in latent space} while preserving the support constraints induced by the learned latent space.

We make the following contributions:
\begin{enumerate}[leftmargin=2em]
    \item We identify entropy regularization as a central ingredient for stable support-constrained policy extraction with flow-matching policies in offline RL.
    \item We reinterpret recent support-constrained flow policy methods, including DSRL and ReFORM, through the lens of entropy regularization.
    \item We introduce \algo{}, an algorithm for entropy-regularized offline policy improvement with latent-space flow policies that preserves dataset-support constraints while enabling smoother and more reliable policy extraction.
    \item Extensive experiments demonstrate that \algo{}, \textit{using a fixed set of method-specific hyperparameters}, achieves the strongest performance profiles among the evaluated flow-policy baselines.
\end{enumerate}

\vspace{-1ex}

\section{Preliminaries}

\subsection{Offline RL}

We use $\Delta(\mathcal X)$ to denote the set of probability distributions over $\mathcal X$, and \pl{gray} notation for placeholder variables.
A Markov Decision Process (MDP) is defined by $\mathcal M = (\mathcal S, \mathcal A, r, \rho_0, P, \gamma)$, with state space $\mathcal S$, action space $\mathcal A$, reward function $r(\pl{s}, \pl{a}): \mathcal S\times \ActionSpace{} \to\mathbb R$, initial state distribution $\rho_0\in\Delta(\mathcal S)$, transition dynamics function $P(\pl{s'}|\pl{s}, \pl{a}):\mathcal S\times\mathcal A\to\Delta(\mathcal S)$, and discount factor $\gamma\in[0, 1]$. Given a dataset of trajectories $\mathcal D$ generated by some behavior policy $\pi_\beta(\pl{a}|\pl{s}):\mathcal S\to\Delta(\mathcal A)$, offline RL aims to learn a policy
$\FullPolicy{}(\pl{a}|\pl{s}) : \mathcal S\to\Delta(\mathcal A)$
that maximizes the expected discounted return $R(\FullPolicy{}) = \mathbb E_{\tau \sim \rho^{\FullPolicy{}}(\pl{\tau})}\left[\sum_{h = 0}^H\gamma^h r(s_h, a_h)\right]$, where $\rho^{\FullPolicy{}}(\tau) = \rho_0(s_0)\FullPolicy{}(a_0|s_0)P(s_1|s_0,a_0)\cdots\FullPolicy{}(a_H|s_H)$. Note that sampling trajectories during training is not allowed in the offline RL setting.

\textbf{The OOD Problem.}
A primary challenge in offline RL is handling OOD actions \citep{levine2020offline}. Standard actor-critic frameworks evaluate a policy $\FullPolicy{}$ by estimating its state-action value function ($Q$-function) $Q(\pl{s}, \pl{a}):\mathcal S\times\mathcal A\to\mathbb R$, representing the expected discounted return:
\begin{equation} \label{eq: q-function}
    Q^{\FullPolicy}(s, a) = \mathbb{E} \left[\, \sum_{h=0}^H \gamma^h r(s_h, a_h) \biggm\vert s_0 = s, a_0 = a, a_h \sim \FullPolicy(s_h), \forall h \geq 1 \right].
\end{equation}
To learn $Q^{\FullPolicy}$,
fitted Q-evaluation is typically used to minimize the temporal difference (TD) error:
\begin{equation}\label{eq: critic-loss}
    L(\phi) = \mathbb E_{(s, a, s')\sim\mathcal D, a'\sim \FullPolicy(s')}\left[ \left( r(s, a) + \gamma Q_{\hat\phi}^{\FullPolicy}(s', a') - Q_\phi^{\FullPolicy}(s, a) \right)^2 \right],
\end{equation}
where $Q_{\hat{\phi}}^{\FullPolicy}$ is a target network often updated via Polyak averaging \citep{polyak1992acceleration}. However, because the dataset $\mathcal{D}$ is fixed, querying the target network with unfamiliar actions $a' \sim \FullPolicy(s')$ can yield unreliable and potentially large $Q$-values. Consequently, the policy may erroneously optimize toward these OOD actions \citep{levine2020offline}.
To address this issue, the learned policy should satisfy the \textbf{support constraint} \citep{kumar2019stabilizing}:
\begin{equation}\label{eq: support-constraint}
    \supp(\FullPolicy(\cdot|s))\subseteq\supp(\BehaviorPol{}(\cdot|s)), \quad \forall s,
\end{equation}
where $\supp(p) \coloneqq \{ x \mid p(x) > 0 \}$ denotes the support of a distribution $p$.
To achieve this, many prior offline RL algorithms regularize the statistical distance $D(\FullPolicy\;\|\;\pi_\beta)$ between the learned policy $\FullPolicy$ and the behavior policy $\pi_\beta$, resulting in the following objective for policy updates:
\begin{equation}\label{eq: statistical-distance-reg-loss}
    \min_{\FullPolicy} \mathbb E_{s\sim\mathcal D, a\sim\FullPolicy(s)}\left[-Q_\phi^{\FullPolicy}(s, a) + \alpha D(\FullPolicy \;\|\; \pi_\beta)\right],
\end{equation}
where $\alpha$ is a hyperparameter for the regularization weight.
However, depending on the statistical distance $D$, these approaches may fail to prevent OOD actions or may limit the improvement of $\FullPolicy$ over the behavior policy $\pi_\beta$ \citep{zhang2026reform}. The weight $\alpha$ also requires task- and dataset-specific tuning \citep{levine2020offline,ogbench_park2025}.

\subsection{Mitigating OOD Actions with Latent Space RL} \label{sec: latent-space-rl}
To mitigate the limitations of statistical distance regularization, recent works propose learning policies in a \textit{latent space} $\LatentSpace{}$ \citep{zhou2021plas,wagenmaker2025steering,zhang2026reform}.
Specifically, a \textbf{decoder} $\mu_{\thetaLToA}^{\pl{s}}(\pl{z}) : \LatentSpace{}\times\mathcal S \to \ActionSpace{}$ is learned that maps samples $z \sim \LatentBCDist{}$ drawn from the latent-space prior distribution $\LatentBCDist{}(\pl z) \in \Delta(\LatentSpace{})$ to the dataset distribution $\BehaviorPol(\cdot | s)$,
i.e., the pushforward of the prior $\LatentBCDist{}$ through the decoder $\muLToA$ approximates the behavior policy $\BehaviorPol(\cdot | s)$.
Note that the decoder is \textit{deterministic} for a given state and latent input, and the stochasticity of the action-space policy comes from sampling the latent input.
Then, instead of learning a policy $\FullPolicy(\cdot | s)$ in the action space $\ActionSpace{}$,
we can learn a \textbf{latent-space policy} $\LatentPolicy{}(\pl z | \pl{s}) : \mathcal{S} \to \Delta(\LatentSpace{})$ which induces a full policy in the action space via the pushforward through the decoder $\muLToA$, i.e.,
\begin{equation} \label{eq: full-policy-pushforward}
    \FullPolicy{}(\cdot | s) \coloneqq (\muLToACondS)_\sharp \LatentPolicy{}(\cdot | s),
\end{equation}
where $(\cdot)_\sharp$ is the pushforward operator.
This parameterization {\eqref{eq: full-policy-pushforward}} transforms the difficult action-space support constraint
{\eqref{eq: support-constraint}} under the complex behavior policy $\BehaviorPol{}$
into a simple latent-space constraint:
\begin{equation} \label{eq: latent-support-constraint}
    \supp(\LatentPolicy{}(\cdot|s)) \subseteq \supp(\LatentBCDist{}).
\end{equation}
Since $\LatentBCDist{}$ is chosen to have simple support (e.g., a uniform distribution on a $d$-ball \citep{zhang2026reform}),
this constraint is much easier to enforce, thereby decoupling OOD action avoidance from policy optimization.

We can thus define the resulting policy optimization problem in the latent space $\LatentSpace{}$ induced by the decoder $\muLToA$.
For a $Q$-function $Q_\phi^{\FullPolicy{}}$ in the action space \eqref{eq: q-function},
this induces a corresponding \textbf{latent-space $Q$-function} $Q_\phi^{\LatentPolicy{}}(\pl{s}, \pl{z}) : \mathcal{S} \times \LatentSpace \to \mathbb{R}$ as the pullback of $Q_\phi^{\FullPolicy{}}$ through the decoder, i.e.,
\begin{equation} \label{eq: latent-q-function}
    Q_\phi^{\LatentPolicy{}}(s, z) = Q_\phi^{\FullPolicy{}}(s, \muLToA(z; s)).
\end{equation}
As long as the latent-space support constraint {\eqref{eq: latent-support-constraint}} is satisfied (e.g., with $\tanh$ squashing, clipping or projection), \textit{any} off-policy optimization algorithm, such as DDPG or TD3 in {\citep{zhou2021plas}}, SAC in {\citep{wagenmaker2025steering}}, or even plain gradient ascent in {\citep{zhang2026reform}}, can then be used to optimize the latent policy $\LatentPolicy{}$ \textit{without} needing to worry about OOD actions.

\begin{figure}[t]
    \centering
    \stepcounter{figure}
    \setcounter{subfigure}{0}
    \vspace{-1em}
    \begin{overpic}[width=1.02\linewidth]{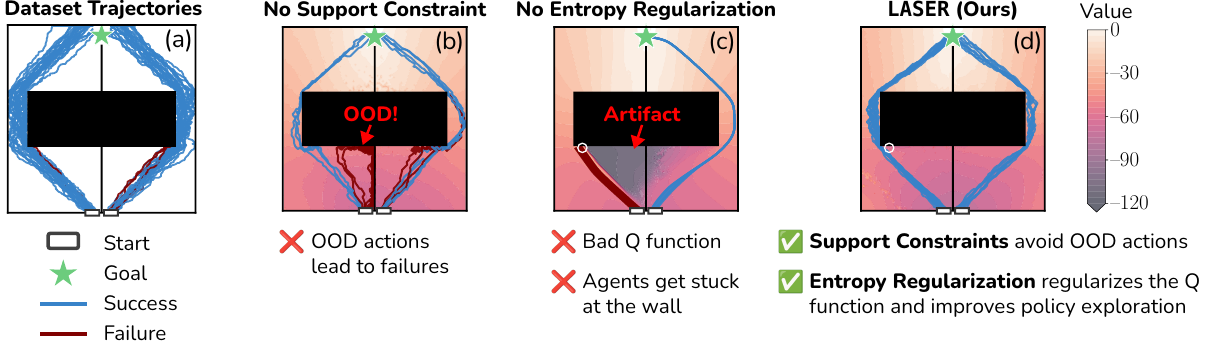}
        \put(1, 24){\refstepcounter{subfigure}\label{fig: point-pos:dataset}}
        \put(24, 24){\refstepcounter{subfigure}\label{fig: point-pos:no_sup}}
        \put(46, 24){\refstepcounter{subfigure}\label{fig: point-pos:no_ent}}
        \put(76, 24){\refstepcounter{subfigure}\label{fig: point-pos:ours}}
    \end{overpic}
    \addtocounter{figure}{-1}
    \vspace{-1.5em}
    \caption{\textbf{Both support constraints and entropy regularization are essential.}
    Without support constraints, the policy generates OOD actions that have unreliable $Q$-values and thus fail.
    Without entropy regularization, the $Q$-function exhibits strange artifacts and the agents get stuck at the wall, resulting in failures on the left side.
    By combining support constraints and entropy regularization, \texttt{LASER} avoids OOD actions, regularizes the $Q$-function, and encourages policy exploration, resulting in a successful policy.
    }
    \label{fig: point-pos}
\end{figure}
\begin{figure}
    \centering
    \stepcounter{figure}
    \setcounter{subfigure}{0}
    \begin{overpic}[width=1.02\linewidth]{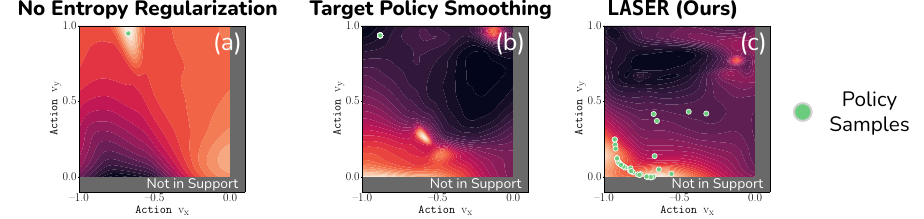}
        \put(1, 24){\refstepcounter{subfigure}\label{fig: point-pos:q-fn:no_ent}}
        \put(24, 24){\refstepcounter{subfigure}\label{fig: point-pos:q-fn:tps}}
        \put(76, 24){\refstepcounter{subfigure}\label{fig: point-pos:q-fn:ours}}
    \end{overpic}
    \addtocounter{figure}{-1}
    \vspace{-1.5em}
    \caption{
        \textbf{$\mathbf{Q}$-function and policy are poorly learned without entropy regularization.}
        We plot the learned $Q$-functions (lighter colors denote higher value) at the state marked by the white circle in \Cref{fig: point-pos:no_ent}.
        Without entropy regularization, the policy becomes nearly deterministic and clusters at a \textit{false} maximum which causes the agents to move up and get stuck at the wall (\Cref{fig: point-pos:no_ent}).
        Adding noise to the policy in TD learning (target policy smoothing \citep{fujimoto2018addressing}) fixes the $Q$-function landscape, but the policy remains nearly deterministic and is suboptimal.
        With entropy regularization, the policy samples spread out more and yield a better regularized $Q$-function, enabling the agents to move around the wall and reach the target (\Cref{fig: point-pos:ours}).
        }
    \label{fig: point-pos:q-fn}
\end{figure}

\textbf{Learning expressive decoders with flow matching.}
A key component of latent-space RL is the decoder $\muLToA$.
A poorly learned decoder may output actions outside the dataset support and defeat the purpose of latent-space RL, or be too simple to capture the behavior policy and limit the performance of the final policy.
Recent works use flow matching \citep{lipman2023flow,liu2023flow,albergo2022building} to learn $\muLToA$ due to its ability to model complex multimodal behavior distributions \citep{wagenmaker2025steering,zhang2026reform}.
Flow matching learns a time-dependent velocity field $v_\thetaLToA(\pl z, \pl t; \pl s): \LatentSpace{}\times[0, 1]\times\mathcal S\to\LatentSpace{}$, parameterized by $\thetaLToA$, that transports an easy-to-sample latent distribution $\LatentBCDist{}\in\Delta(\mathcal L)$ (e.g., standard Gaussian $\Gaussian(0, I^d)$) to the pushforward action distribution
$(\muLToACondS)_\sharp \LatentBCDist{} \approx \BehaviorPol(\cdot|s)$
over the time interval $t \in [0, 1]$,
using the flow-matching loss:
\begin{equation}\label{eq: flow-matching}
    L_\LToA(\thetaLToA) = \mathbb E_{(s, a)\sim\mathcal D, z\sim \LatentBCDist{}, t\sim \mathcal U[0, 1]}\left[ \norm{v_{\thetaLToA}(x_t, t; s) - (a - z)}^2 \right], \quad x_t = (1 - t)z + ta.
\end{equation}
The corresponding BC \textbf{flow map} $\psi_{\thetaLToA}(\pl z, \pl t; \pl s): \LatentSpace{} \times [0, 1] \times \mathcal S \to \LatentSpace{}$ for the velocity field $v_{\thetaLToA}$ can be found as the solution to the ODE with initial condition $z$ at $t=0$ and velocity field $v_{\thetaLToA}$, i.e.,
\begin{equation} \label{eq: bc-ode}
    \psi_{\thetaLToA}(z, t; s) = z + \int_0^t v_{\thetaLToA}(\psi_{\thetaLToA}(z, \tau; s), \tau; s) \diff \tau.
\end{equation}
The decoder $\muLToA$ is then defined as the BC flow map at time $t=1$, i.e., $\muLToA(z; s) \coloneqq \psi_{\thetaLToA}(z, 1; s)$.

\section{Latent Space Adjoint Matching for Support-Constrained Entropy-Regularized Offline RL (\textbf{\texttt{LASER}})}
\subsection{Entropy Regularization is Essential for Latent Space RL}
\label{sec: er-essential}
Given that latent-space RL methods claim to solve the OOD problem,
one might be tempted to conclude that directly performing policy optimization in the latent space with support constraints results in a strong policy.
However, we find this to be false.
We illustrate this in a simple 2D reach-avoid environment (see \Cref{app: point-pos-detail} for details).
Choosing $\LatentPolicy{}$ to be a flow policy as in \citep{zhang2026reform}, i.e.,
for a base distribution $\BaseDist{}$ and velocity field $v_{\thetaNToL}$, and corresponding flow map $\psi_{\thetaNToL}$,
\begin{equation} \label{eq: latent-flow-policy}
    \LatentPolicy{}(\cdot | s) = (\muNToLCondS)_\sharp \BaseDist{}, \quad \muNToLCondS(w) \coloneqq \psi_{\thetaNToL}^{{s}}(w, 1),
\end{equation}
we find that na\"{\i}vely training $\LatentPolicy{}$ to maximize $Q_\phi^{\LatentPolicy{}}$ with reparametrized gradients and enforcing the latent-space support constraint \eqref{eq: latent-support-constraint} via $\tanh$ squashing
(\Cref{fig: point-pos:no_ent}) yields a policy that causes the agent to become stuck at the wall and fail to reach the target (\Cref{fig: point-pos}).
Moreover, the value function is poorly learned and exhibits artifacts.
We explore this further by visualizing the learned $Q$-function
and policy $\FullPolicy{}$
at the state right before the agent gets stuck at the wall (\Cref{fig: point-pos:q-fn:no_ent}).
Surprisingly, the learned policy is maximizing the learned $Q$-function and is not stuck in a local optimum despite the policy failing to reach the target.
Using target policy smoothing (TPS) \citep{fujimoto2018addressing} to smooth the learned $Q$-function by adding noise to the target action $a'$ in the TD learning target \eqref{eq: critic-loss}
fixes the $Q$-function landscape (\Cref{fig: point-pos:q-fn:tps}),
but the resulting policy is now suboptimal.
This is a problem of the policy having insufficient exploration\footnote{Here, exploration refers to sampling diverse in-support actions during policy optimization and critic updates.}, which can be mitigated by increasing policy entropy \citep{haarnoja2018soft}.

This motivates the use of an \textbf{explicit entropy regularization term} to obtain a policy with higher entropy, which not only helps with policy optimization via exploration, but also obviates the need for TPS in TD learning and yields a better regularized $Q$-function.
Namely,
for weight $\alpha > 0$,
we consider the following entropy-regularized objective to train the latent-space policy $\LatentPolicy{}$:
\begin{equation} \label{eq: entropy-reg-pol-loss}
    L_{\NToL}(\thetaNToL) = -\mathbb E_{s\sim\mathcal D}\left[\mathbb E_{w\sim \BaseDist{}}\left[Q_{{\phi}}^{\LatentPolicy{}}(s, \muNToL(w; s))\right] + \alpha \mathcal H (\muNToL(\cdot; s))\right].
\end{equation}
The entropy term can also be optimized using reparametrized gradients by computing the log probability under the flow map $\muNToL$ using the continuity equation \citep{lipman2023flow} to obtain
\begin{equation} \label{eq: flow-entropy}
    \mathcal H(\mu_{\thetaNToL}(\cdot; s)) = \mathbb E_{w \sim \BaseDist{}}\left[-\log \BaseDist{}(w) + \int_0^1 \nabla_{x_t} \cdot v_{\thetaNToL}(x_t, t; s) \diff t \right], \quad x_t = \psi_{\thetaNToL}(w, t; s).
\end{equation}
We discuss a more efficient method that does not require backpropagation through the flow map in the next subsection.
Adding explicit entropy regularization improves both the $Q$-function and the policy (\Cref{fig: point-pos:q-fn:ours}),
resulting in a policy that successfully guides the agent around the wall to the target in the simple illustrative environment (\Cref{fig: point-pos:ours}).

\begin{remark}
Examining recent latent-space RL methods, we find that their policy extraction does include an entropy regularization term despite its importance not being made explicit in their papers.
This is done either explicitly,
as in DSRL \citep{wagenmaker2025steering}, which uses a Gaussian latent policy and an SAC-style objective \citep{haarnoja2018soft},
or implicitly as in ReFORM \citep{zhang2026reform}, where the geometric reflection term prevents latent collapse (see \Cref{app: reform-issue}).
However, using an expressive flow policy for the latent-space policy and explicitly considering entropy regularization enables our method to empirically Pareto-dominate Gaussian policies (\Cref{fig: ent-q-tradeoff}) and maximize the $Q$-function better than other implicit entropy regularization schemes (\Cref{app: reform-issue}).
A comparison of methods with no, implicit, and explicit entropy regularization across 20 tasks in more complex manipulation environments further supports the benefit of explicit regularization (\Cref{app: experiment-additional-results}).
\end{remark}

\begin{figure}
    \centering
    \includegraphics[width=.7\linewidth]{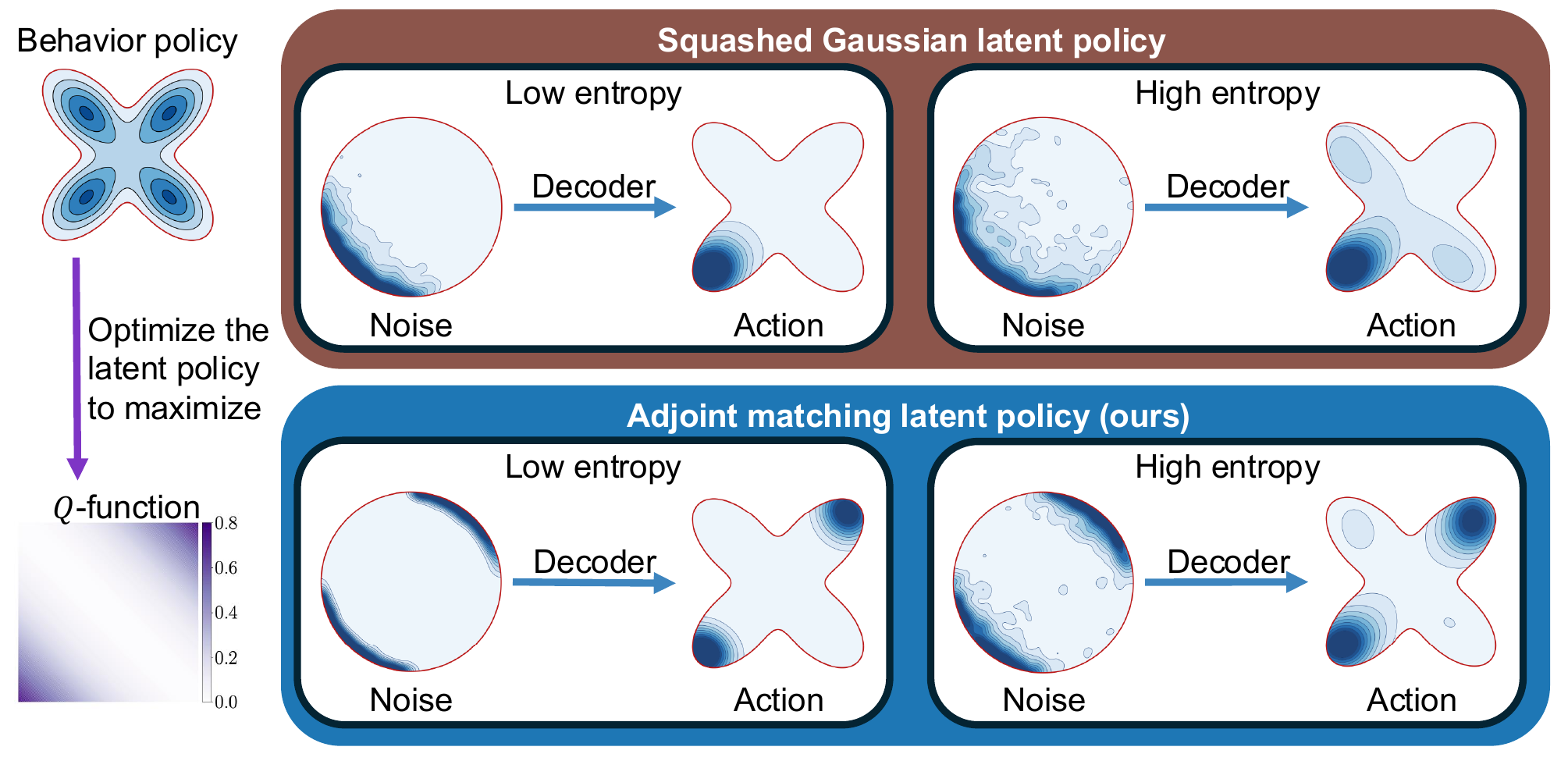}
    \includegraphics[width=.29\linewidth]{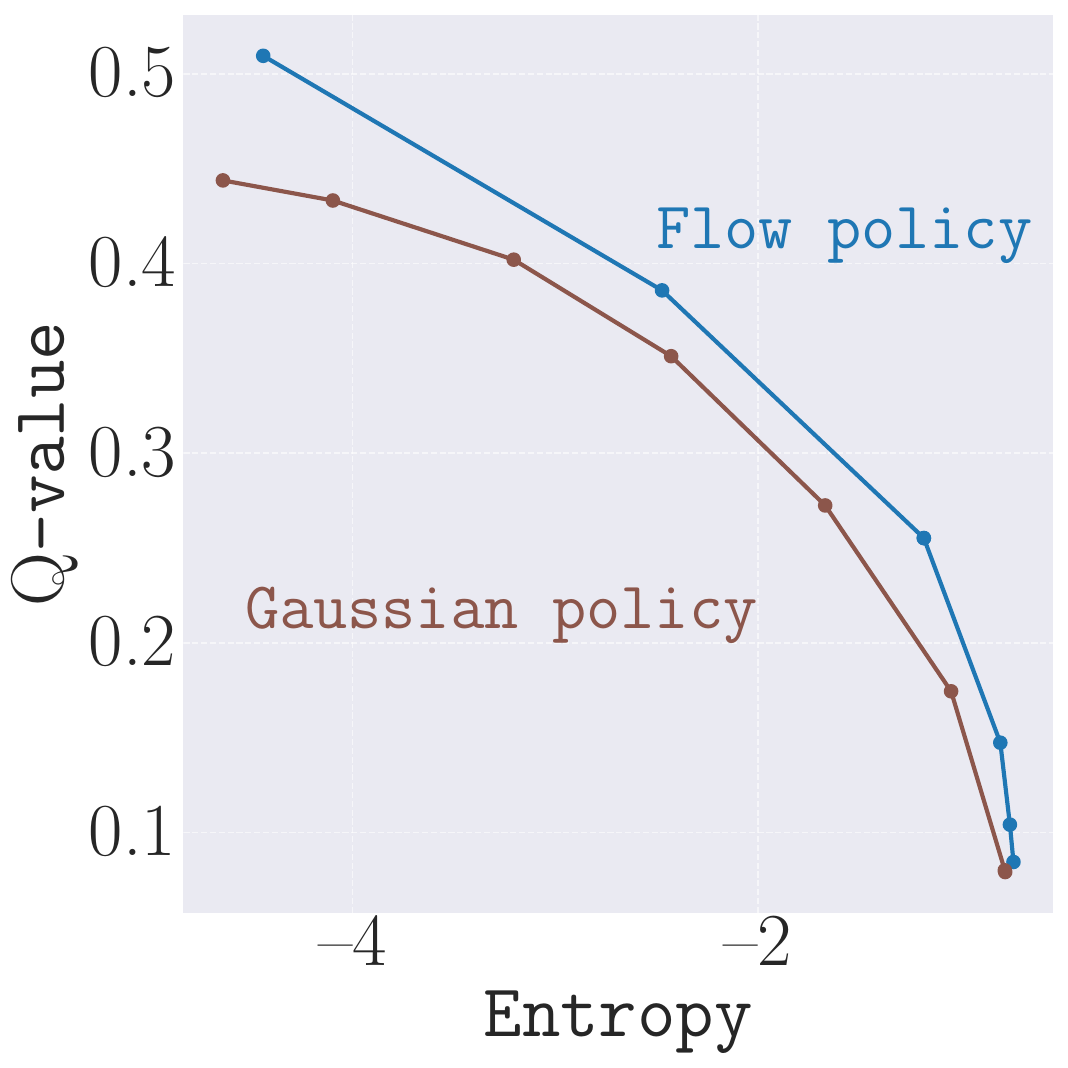}
    \caption{\textbf{Flow latent policy + adjoint matching Pareto-dominates a squashed Gaussian latent policy.}
       For every entropy and $Q$-value achieved by the Gaussian latent policy, there exists an operating point achievable by the flow latent policy that has both higher entropy and higher $Q$-value.
    }
    \label{fig: ent-q-tradeoff}
\end{figure}

\textbf{Why use a flow policy for $\mathbf{\pi}_\texttt{L}$?}
While a Gaussian latent-space policy significantly simplifies entropy-regularized optimization,
we motivate our use of a flow model with two arguments.
(1) A Gaussian $\LatentPolicy{}$ produces unimodal latents, which restricts the ability of the full policy in action space to capture multimodality \citep{zhang2026reform}.
(2) While high-entropy policies facilitate robust $Q$-function learning and help avoid local minima, they often yield lower $Q$-values. By being more expressive, flow policies empirically Pareto-dominate Gaussian policies on the performance-entropy tradeoff (\Cref{fig: ent-q-tradeoff}).

\subsection{Efficient Entropy-Regularized Policy Optimization via Adjoint Matching}\label{sec: adjoint-matching}

{Efficient entropy estimation for expressive policies remains challenging \citep{messaoud2024s2ac,messaoud2026particles}.}
While \eqref{eq: entropy-reg-pol-loss} can be used as-is to train the latent policy $\LatentPolicy{}$ with reparametrized gradients,
computing the gradient {for this flow parameterization} requires backpropagating through the flow map $\muNToL$ \eqref{eq: latent-flow-policy} and the entropy \eqref{eq: flow-entropy}, both of which are solutions to ODEs and require backpropagation through time (BPTT), which is computationally expensive and prone to numerical instability \citep{fql_park2025,sorl,li2026qam}.

To circumvent these computational bottlenecks, we adopt an alternative approach.
Instead of optimizing the loss directly via BPTT,
we derive the optimal latent-space policy $\LatentPolicy{}^\star(\cdot | s) = (\mu_{\thetaNToL}^s)^\star_\sharp \BaseDist{}$ that minimizes the entropy-regularized policy loss \eqref{eq: entropy-reg-pol-loss} in the typical case where the support of the latent-space prior distribution $\LatentBCDist{}$ has finite Lebesgue measure (e.g., a $d$-ball).

\begin{lemma}\label[lemma]{thm: ng-target-optimal}
    Fix a state $s \in \mathcal{S}$ and parameters of the critic and the decoder.
    Suppose $\supp(\LatentBCDist{})$ has finite positive Lebesgue measure, and the $Q$-function $Q^{\LatentPolicy{}}_\phi(s, \cdot)$ is measurable and has a finite upper bound on $\supp(\LatentBCDist{})$.
    Then, among latent densities supported on $\supp(\LatentBCDist{})$, the optimal distribution $\LatentPolicy{}^*$ that minimizes the loss \eqref{eq: entropy-reg-pol-loss} is given by the following $Q$-tilted distribution:
    \begin{equation}\label{eq: ng-target-optimal}
        \LatentPolicy{}^\star(\cdot|s) \propto \LatentUniform{}(\cdot)\exp\left( Q^{\LatentPolicy{}}_\phi(s, \cdot) / \alpha \right),
    \end{equation}
    where $\LatentUniform{} = \mathcal U( \supp(\LatentBCDist{}))$ is the uniform distribution on the support of $\LatentBCDist{}$.
\end{lemma}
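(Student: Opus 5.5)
The plan is the standard Gibbs variational argument: rewrite the loss \eqref{eq: entropy-reg-pol-loss} at the fixed state $s$ as a KL divergence to the $Q$-tilted density, plus a constant that does not depend on the policy. Write $K \coloneqq \supp(\LatentBCDist{})$ and $Q(z) \coloneqq Q^{\LatentPolicy{}}_\phi(s,z)$. Let $p$ be any latent density supported on $K$; this is the density of $\LatentPolicy{}(\cdot|s) = (\muNToLCondS)_\sharp \BaseDist{}$. Then the per-state loss is
\begin{equation*}
    \ell(p) = -\E_{z\sim p}[Q(z)] + \alpha \E_{z\sim p}[\log p(z)].
\end{equation*}
Here we use that the entropy in \eqref{eq: entropy-reg-pol-loss} is the differential entropy of the pushforward density $p$, so the first expectation over $w$ is just $\E_{z\sim p}[Q(z)]$.

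First I will check that the candidate \eqref{eq: ng-target-optimal} is a well-defined probability density. Set
\begin{equation*}
    Z \coloneqq \int_K \tfrac{1}{|K|}\exp(Q(z)/\alpha)\diff z .
\end{equation*}
Since $Q$ is measurable, the integrand is measurable and nonnegative. Since $Q \le M$ on $K$ and $0<|K|<\infty$, we get $Z \le \exp(M/\alpha) < \infty$. We also have $Z>0$, because the integrand is strictly positive on a set of positive measure. So $p^\star(z) = \LatentUniform{}(z)\exp(Q(z)/\alpha)/Z$ is a density supported on $K$, and it satisfies $\log p^\star = Q/\alpha - \log|K| - \log Z$ on $K$.

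Next I will substitute $Q = \alpha(\log p^\star + \log|K| + \log Z)$ into $\ell(p)$. This gives
\begin{equation*}
    \ell(p) = \alpha\,\E_{p}\bigl[\log p - \log p^\star\bigr] - \alpha\log(|K| Z) = \alpha \KL(p\,\|\,p^\star) - \alpha \log(|K|Z).
\end{equation*}
The last term is independent of $p$. By Gibbs' inequality, $\KL \ge 0$ with equality iff $p = p^\star$ almost everywhere, so $p^\star$ is the unique minimizer (up to null sets). Since $\alpha>0$, the minimizer is the $Q$-tilted distribution \eqref{eq: ng-target-optimal}. The argument applies separately at each state, and the outer expectation over $s\sim\mathcal D$ preserves the pointwise minimizer, which gives the lemma for every fixed $s$.

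The main obstacle is justifying the rearrangement of expectations when integrals could be infinite. $Q$ is bounded above but not below, and $\E_p[\log p]$ might be $\pm\infty$ for a general density. I would handle this as follows.
\begin{itemize}
    \item Restrict attention to $p$ for which $\ell(p)$ is well defined and not $+\infty$. The alternative case is trivial, since $\ell(p^\star) = -\alpha\log(|K|Z)$ is finite.
    \item The entropy of any density on a set of finite measure is at most $\log|K|$, so $\E_p[\log p] \ge -\log|K|$ is never $-\infty$.
    \item $\E_p[-Q] \ge -M$ is bounded below.
\end{itemize}
Hence both terms of $\ell(p)$ lie in $(-\infty,+\infty]$ and can be added without an $\infty-\infty$ ambiguity. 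For the same reason the pointwise identity $-Q + \alpha\log p = \alpha(\log p - \log p^\star) - \alpha\log(|K|Z)$ may be integrated against $p$ term by term. This yields the KL identity above, with $\KL(p\|p^\star)$ taking values in $[0,\infty]$.
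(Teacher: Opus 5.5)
Your proposal is correct and is essentially the paper's argument. Both use the upper bound on $Q$ to get $0<Z<\infty$, rewrite the per-state objective as $\alpha\KL(p\,\|\,p^\star)$ plus a policy-independent constant, and conclude by Gibbs' inequality. The paper passes through an auxiliary lemma $\mathcal{H}(p)=\log\lambda(\supp(\LatentBCDist{}))-\KL(p\,\|\,\LatentUniform{})$ before completing the KL to the tilted density, whereas you substitute $\log p^\star$ in one step, and your handling of possibly infinite integrals is more careful than the paper's, which leaves that point implicit.
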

This optimal latent distribution has the Gibbs form used in maximum-entropy RL \citep{haarnoja2018soft,messaoud2024s2ac}, here restricted to the latent support.
See \Cref{app: ng-target-optimal:proof} for the proof.
Note that this is a statewise, fixed-critic optimum and it does not establish convergence of the joint actor-critic updates.
With \Cref{thm: ng-target-optimal}, we can optimize \eqref{eq: entropy-reg-pol-loss} by learning to sample from the optimal distribution $\LatentPolicy{}^\star$, which can be done efficiently \textit{without BPTT} using \textbf{adjoint matching} \citep{domingo2024adjoint}.

Namely, suppose both $\LatentPolicy{}$ and $\LatentUniform{}$ are flow models with a common base distribution $\BaseDist{}$, i.e., \eqref{eq: latent-flow-policy} holds for $\LatentPolicy{}$, and for flow map $\psiNToLBase$ generated by velocity field $\vNToLBase$,
\begin{equation}
    \LatentUniform{} = (\muNToLBase)_\sharp \BaseDist{},\quad \muNToLBase(w) = \psiNToLBase(w, 1),
\end{equation}
and let $p_v(\{ \pl{w}_t\}_{t \in [0, 1]}) \in \Delta(\NoiseSpace^{[0, 1]})$ denote the trajectory law of the following memoryless stochastic differential equation (SDE) with control $v$ and $w_0\sim\BaseDist{}$:
\begin{equation}\label{eq: forward-sde}
    \diff w_t = \left( 2\, v(w_t, t; s) - w_t / t \right) dt + \sigma_t \diff B_t,
\end{equation}
where $\sigma_t=\sigma_0\sqrt{2(1-t)/t}$ is the memoryless noise schedule and $B_t$ is standard Brownian motion.
We fix the state $s$ and suppress it in $p_v$. For the Gaussian base $\BaseDist{}=\mathcal N(0,\sigma_0^2I)$ used here, the reference SDE with $v=\vNToLBase$ shares the time marginals of the reference flow ODE in the exact flow-matching formulation, including the terminal marginal $\LatentUniform{}$.
Moreover, consider the tilted distribution over \textit{SDE trajectories} $p_{v^\star}$,
defined as
\begin{equation} \label{eq: am-target-dist}
    p_{v^\star}(\{w_t\}_{t \in [0, 1]}) \propto p_{\vNToLBase}(\{w_t\}_{t \in [0, 1]}) \exp\left( Q^{\LatentPolicy{}}_\phi(s, w_1) / \alpha \right),
\end{equation}
where \eqref{eq: ng-target-optimal} now corresponds to the marginal of $p_{v^\star}$ at $t=1$.
Then, $p_{v^\star}$ can be learned without BPTT
by repeatedly optimizing a regression loss using a ``special estimate'' of $v^\star$ as the regression target.
Specifically, the optimal control $v^\star$ realizing \eqref{eq: am-target-dist} solves the following stochastic optimal control (SOC) problem with quadratic control cost and terminal cost given by the negative $Q$-function \citep{domingo2024adjoint}:
\begin{equation}
    \min_v \mathbb{E}_{s \sim \mathcal{D}, \{w_t\}} \left[ \int_0^1 \frac{2}{\sigma_t^2} \left\| v(w_t, t; s) - \vNToLBase(w_t, t) \right\|^2 dt - Q_{\phi}^{\LatentPolicy{}}(s, w_1) / \alpha \right].
\end{equation}
Then, given a rollout $\{w_t\}_{t \in [0, 1]}$ sampled with control $v$, we can construct the following estimate $\hat{v}^\star$ of the optimal control $v^\star$:
\begin{equation}\label{eq: v-star}
    \hat{v}^\star(w_t, t; s) = \vNToLBase(w_t, t) - \sigma_t^2 \tilde{g}_t / 2,
\end{equation}
where $\tilde{g}_t$ is the ``lean'' adjoint state computed by integrating the reverse ODE using $\{w_t\}_{t \in [0, 1]}$
\begin{equation} \label{eq: lean-adjoint}
    d\tilde{g}_t = - \left(\nabla_{w_t} \left[ 2\vNToLBase(w_t, t) - w_t / t  \right]\right)^{\top} \tilde{g}_t dt, \quad \tilde{g}_1 = - \nabla_{w_1} Q^{\LatentPolicy{}}_\phi(s, w_1) / \alpha.
\end{equation}
{In implementation, we substitute the target critic $Q_{\hat\phi}^{\LatentPolicy{}}$ for $Q_\phi^{\LatentPolicy{}}$ in these policy-improvement expressions (\Cref{app: implementation}).}
Using the estimate $\hat{v}^\star$ as the regression target in the following regression loss then results in a unique fixed point at the optimal control $v^\star$ \citep{domingo2024adjoint}, where we replace the current velocity field $v$ with the trainable velocity field $v_{\thetaNToL}$ to be optimized:
\begin{equation}\label{eq: am}
    {L}_\mathrm{AM}(\thetaNToL) = \mathbb{E}_{s \sim \mathcal{D}, \{w_t\}} \left[ \frac{1}{2} \int_0^1 \norm[\Big]{ \frac{2}{\sigma_t} \Big( v_{\thetaNToL}(w_t, t; s) - \hat{v}^\star(w_t, t; s) \Big) }^2 dt \right].
\end{equation}
Importantly, optimizing \eqref{eq: am} \textit{does not require backpropagation through the flow} and only requires the solution of an ODE to compute the adjoint state $\tilde{g}_t$ for the regression target $\hat{v}^\star$, circumventing the computational and numerical stability issues of BPTT.
For a more comprehensive mathematical treatment of adjoint matching, we refer readers to \citep{domingo2024adjoint}.

\begin{figure}
    \centering
    \includegraphics[width=1.01\linewidth]{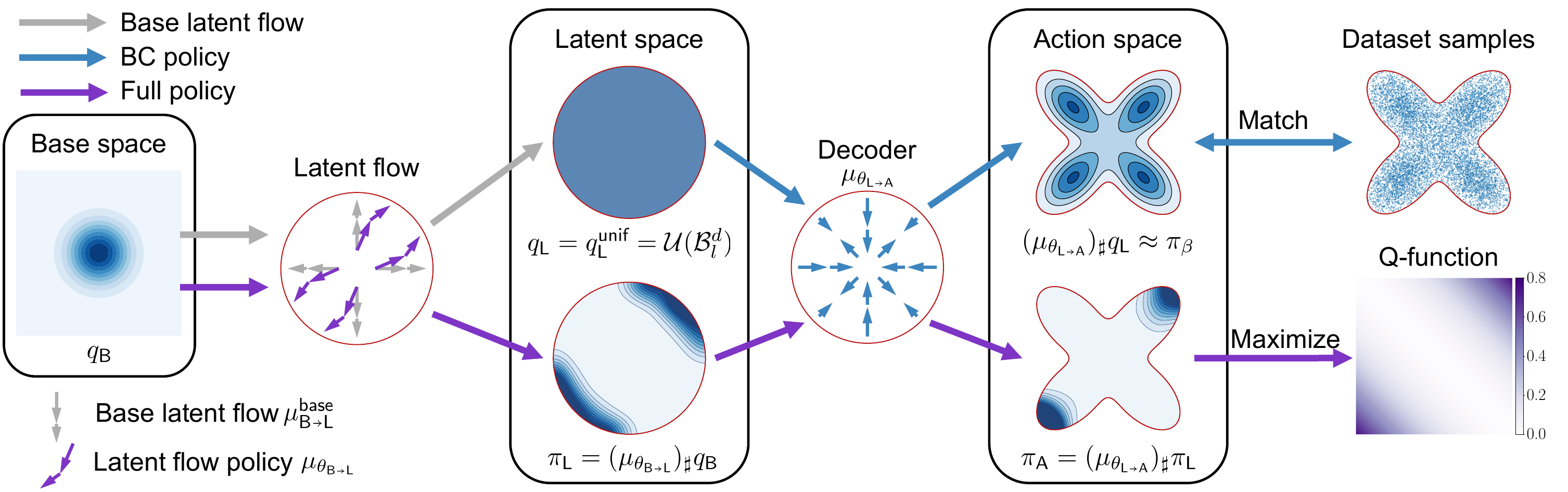}
    \caption{\textbf{\algo{} Algorithm.}
    (\bcarrow{}) We train a flow-matching decoder $\muLToA$
    that maps samples from the uniform distribution in the $d$-ball $\LatentBCDist{} = \LatentUniform{} = \mathcal U(\mathcal B_l^d)$ in \textit{latent space} to the dataset actions in \textit{action space}.
    (\ngarrow{}) We then learn a base latent flow $\muNToLBase{}$ that maps the prior distribution $\BaseDist{}$ in the base space to the uniform distribution in the latent space $\LatentUniform{}$.
    (\amarrow{}) The base latent flow is then fine-tuned via adjoint matching, which results in a latent flow policy $\muNToL{}$ that generates the final action distribution $\FullPolicy{} = (\muLToA)_\sharp (\muNToL)_\sharp \BaseDist{}$.
}
    \label{fig: algo}
\end{figure}

\subsection{Algorithm}
\label{sec: full-algo}
We now present \textbf{\texttt{L}}atent Space \textbf{\texttt{A}}djoint Matching for \textbf{\texttt{S}}upport-Constrained \textbf{\texttt{E}}ntropy-\textbf{\texttt{R}}egularized Offline RL (\textbf{\texttt{LASER}}).
\algo{} combines the aforementioned insights to perform latent-space RL with an expressive flow decoder and efficient entropy-regularized policy optimization via adjoint matching.
\begin{itemize}[leftmargin=*, topsep=0pt, parsep=0pt]
    \item Following \citep{zhang2026reform}, we choose the latent prior $\LatentBCDist{} = \Uniform(\mathcal{B}_l^d)$ to be the uniform distribution on the ball $\mathcal{B}_l^d$ of radius $l$ in $\mathbb{R}^d$, which has finite Lebesgue measure and satisfies the conditions of \Cref{thm: ng-target-optimal}.
    \item Adjoint matching requires $\LatentUniform$, the uniform distribution over the support of $\LatentBCDist{}$, to also be a flow model. While an analytical vector field can be derived since $\LatentUniform = \LatentBCDist{}$ is the uniform distribution over the ball, it involves special functions with no easy closed-form expression (see \Cref{app: gt-flow-gaussian-uniform}). Instead, we train a flow model via flow matching to match $\LatentUniform$ with the same base distribution $\BaseDist{}$ as $\LatentPolicy{}$ to simplify the implementation.
\end{itemize}
We now summarize \algo{} (\Cref{fig: algo}),
which consists of four components trained jointly:
\textbf{(1. Flow-matching decoder)} We train a state-conditioned flow decoder $\muLToA$ using flow matching to map samples from the uniform distribution on the $d$-ball $\LatentBCDist{}$ to dataset actions, i.e., behavior cloning (\Cref{sec: latent-space-rl}).
\textbf{(2. Flow matching for $\LatentUniformBold{}$)} We train a flow model $\muNToLBase$ to match the uniform distribution $\LatentUniform{}$ in latent space from $\BaseDist{}$ to use in adjoint matching.
\textbf{(3. Critic learning)} We learn a critic $Q^{\FullPolicy}_\phi$ using the standard TD learning objective \eqref{eq: critic-loss}.
\textbf{(4. Entropy-regularized policy optimization)} Finally, we perform entropy-regularized policy optimization in latent space to maximize the latent-space $Q$-function $Q^{\LatentPolicy{}}_\phi$ \eqref{eq: latent-q-function} using adjoint matching (\Cref{sec: adjoint-matching}).\footnote{In practice, we project the numerical latent flow output onto $\mathcal B_l^d$ before decoding. This support correction approximates the density-based formulation (\Cref{app: implementation}).}
We provide practical implementation details and a high-level algorithm for \algo{} in \Cref{app: implementation}.

\section{Related Work}
We discuss the most closely related work in this section and defer a broader discussion of offline RL and generative policies to \Cref{app: extended_related}.

\textbf{Distribution shift in offline RL. }
A central challenge in offline RL is distribution shift: learned value
functions can overestimate actions outside the dataset support, causing
policy improvement to select out-of-distribution actions with poor true
value. Prior methods address this issue by regularizing the policy toward
the behavior policy with statistical distances, learning pessimistic value
functions, or explicitly restricting policy improvement to supported
actions \citep{kumar2019stabilizing,kumar2020conservative,kostrikov2022offline,wu2022spot,mao2023str}.
We follow the support-constrained perspective, using an
expressive flow policy to enforce support by construction.

\textbf{Expressive generative policies for offline RL. }
Diffusion \citep{song2020score,ho2020denoising} and flow-matching \citep{lipman2023flow,albergo2022building,liu2023flow} models have recently been used as expressive
alternatives to deterministic or Gaussian policies, especially in robotic
imitation learning where action distributions are often multimodal
\citep{chi2023diffusion,wang2023diffusion}.
These policies have also been adapted to offline RL, where the goal is to
improve beyond the behavior policy using a learned critic. Existing methods
differ in how they use the critic: some use critic values to select,
reweight, or clone high-value actions \citep{chen2023offline,hansen2023idql,kang2023efficient,Zhang2025EnergyWeightedFlowMatching}, while others use critic gradients to
directly improve the generated actions
\citep{ding2024diffusion,wang2023diffusion,fql_park2025}.
Most closely related to our optimization mechanism,
\citep{li2026qam} fine-tunes flow policies using critic
gradients under a KL-regularized reward objective. Unlike these methods,
which rely on statistical regularization to control distribution shift,
\algo{} enforces support constraints through its latent action
parameterization.

\textbf{Support-constrained policy extraction with latent-space RL. }
Recent methods avoid OOD actions by learning a latent action
space whose decoder maps into the support of the behavior policy, then
performing policy optimization in this latent space
\citep{zhou2021plas,wagenmaker2025steering,zhang2026reform}.
Our method builds on this idea, but revisits the optimization problem that
remains once support has been controlled. In particular, we show that
entropy regularization is important for preventing policy collapse and
stabilizing policy extraction within the supported action set.

\textbf{Entropy regularization. }
Entropy regularization is a classical tool for smoothing policy
optimization, improving robustness, and encouraging exploration
\citep{ziebart2010modeling,haarnoja2018soft,ahmed2019understanding}, with extensions to expressive policies \citep{messaoud2024s2ac,messaoud2026particles}.
In offline RL,
though some works have noted its benefits for exploration during online fine-tuning \citep{ding2024diffusion,dong2025expo},
it receives much less attention, perhaps due to the implicit entropy regularization effect from behavior policy regularization techniques.
Our perspective is different: because support constraints are enforced by
the policy parameterization, entropy regularization can be used for its
standalone role in policy extraction.

\section{Experiments}

We conduct experiments to answer the following research questions.
\textbf{(Q1): }How does \algo{} perform against other offline RL algorithms with flow policies?
\textbf{(Q2): }How sensitive is \algo{} to hyperparameters?
\textbf{(Q3): }How does adjoint matching compare with BPTT?
\textbf{(Q4): }Does the latent space decouple entropy regularization from behavior regularization?
Details on implementation, baselines, environments, and additional results and ablations are in \Cref{app: experiments}.

\subsection{Setup}

\textbf{Environments.}
We evaluate our method on the OGBench offline RL benchmark \citep{ogbench_park2025}. The main evaluation suite spans four environments, covering both locomotion and manipulation domains, with five tasks per environment. Each task is evaluated on two datasets (\datasetname{clean} and \datasetname{noisy}), yielding a total of $4 \times 5 \times 2 = 40$ evaluation settings. We additionally evaluate harder tasks in \Cref{app: experiment-additional-results}.

\textbf{Baselines.}
Since recent studies have shown that flow-based policies outperform traditional alternatives \citep{fql_park2025,agrawalla2025floq,sorl,li2026qam},
we benchmark \ourname{} against state-of-the-art flow-based methods.
These include \baselinename{IFQL} \citep{hansen2023idql,fql_park2025},
which combines a BC flow policy with an IQL critic and best-of-$N$ action selection; \baselinename{FQL} \citep{fql_park2025}, which applies Wasserstein distance regularization; and \baselinename{QAM} \citep{li2026qam}, which uses adjoint matching to approximate the $Q$-value-weighted BC action distribution, alongside its extension \baselinename{QAM-E} \citep{li2026qam} that learns an additional residual policy. We also compare against latent-space RL methods, specifically \baselinename{DSRL} \citep{wagenmaker2025steering} with a Gaussian latent policy, and \baselinename{ReFORM} \citep{zhang2026reform} with a reflected flow latent policy.
The method-specific hyperparameters of \ourname{}, \baselinename{ReFORM}, and \baselinename{IFQL} remain fixed across all evaluated tasks, while those of the other baselines vary by environment and dataset (\Cref{tab: env-specific-unique-hyperparams}). All methods share the environment-dependent action chunk lengths and pessimistic backup weights (\Cref{tab: env-specific-common-hyperparams}).

\subsection{Results}

\textbf{(Q1): \algo{} achieves state-of-the-art performance using a single set of method-specific hyperparameters.}
In \Cref{fig: main-exp}, we present the performance profiles \citep{agarwal2021deep} aggregating the 40 main evaluation settings across both the \datasetname{clean} and \datasetname{noisy} datasets.
The profiles show that \ourname{} outperforms all evaluated baselines in aggregate. It achieves at least $95\%$ of the best mean success rate in $34/40$ settings and a mean success rate of $100\%$ in $13/40$ settings. A paired statistical comparison supports the aggregate advantage (global $p=1.35\times10^{-3}$), with details in \Cref{app: experiment-additional-results}.
Notably, on the \datasetname{noisy} datasets, \ourname{} and \baselinename{ReFORM} achieve the highest mean success rates, underscoring the advantage of enforcing support constraints, which does not restrict policy improvement.
Furthermore, \ourname{} significantly improves upon \baselinename{ReFORM} by using explicit entropy regularization and avoiding the static-flow behavior (\Cref{app: reform-issue,app: experiment-additional-results}).

\begin{figure}
    \centering
    \includegraphics[width=\linewidth]{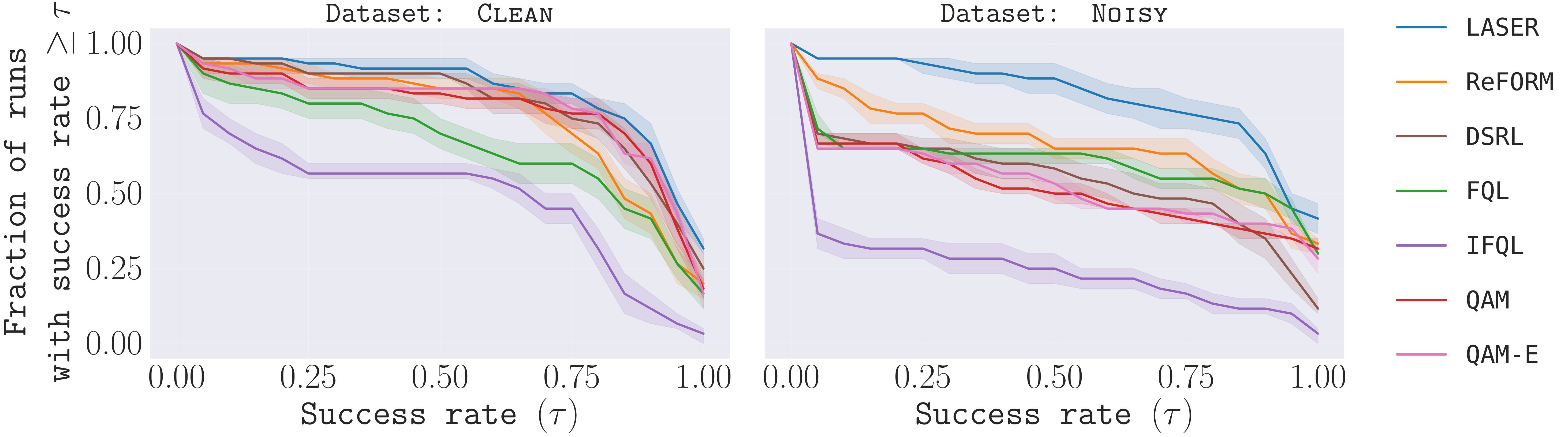}
    \caption{\textbf{Performance profile plots for the main evaluation suite with different datasets.} For a given success rate $\tau$ (x-axis), we plot the fraction of runs with a success rate $\geq\tau$ \citep{agarwal2021deep}. Shading shows pointwise $95\%$ bootstrap confidence intervals, computed by resampling training seeds within each task 2000 times. \ourname{} attains the strongest empirical performance profile on both datasets.}
    \label{fig: main-exp}
\end{figure}

\begin{figure}
    \centering
    \includegraphics[width=.32\linewidth]{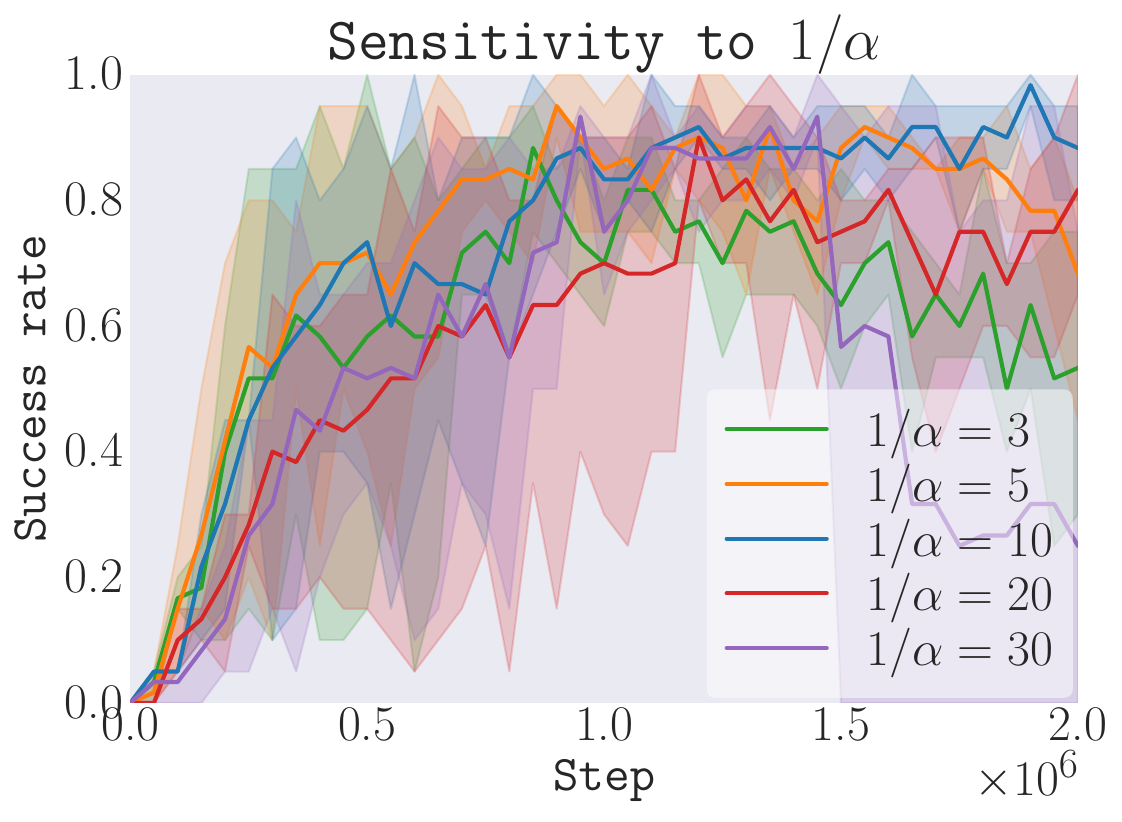}
    \includegraphics[width=.32\linewidth]{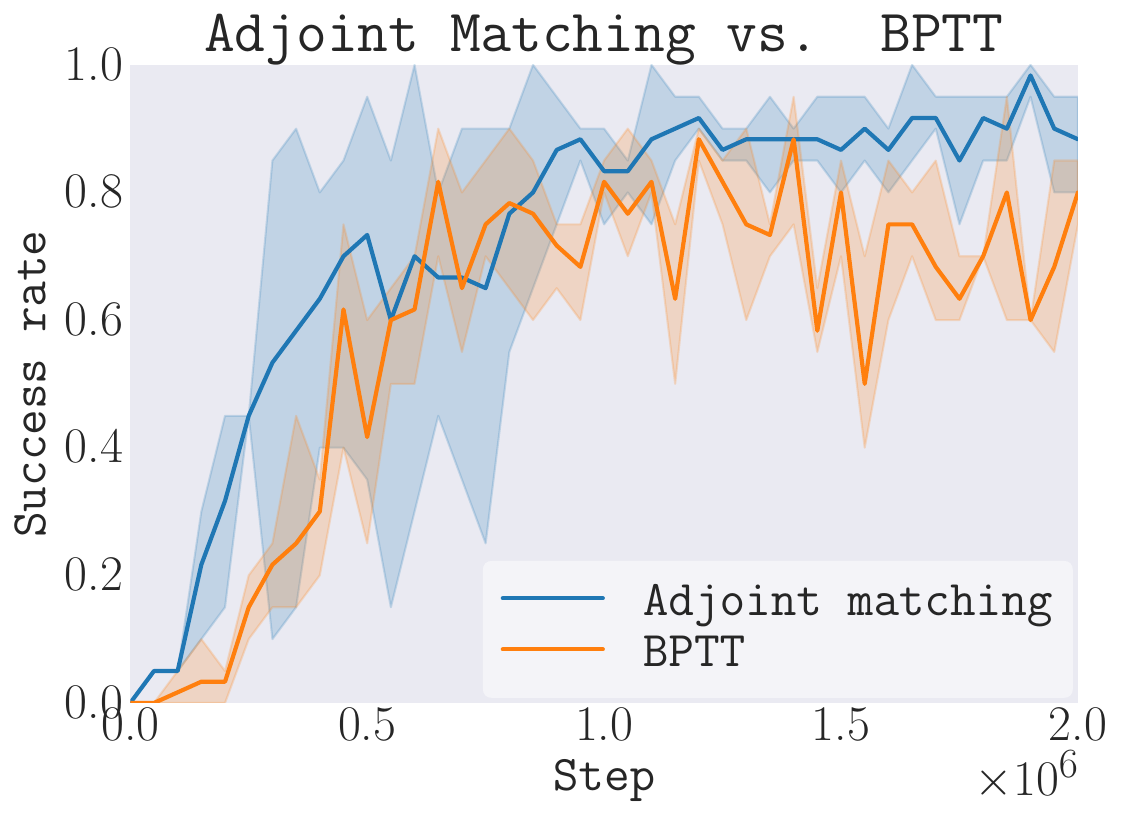}
    \includegraphics[width=.32\linewidth]{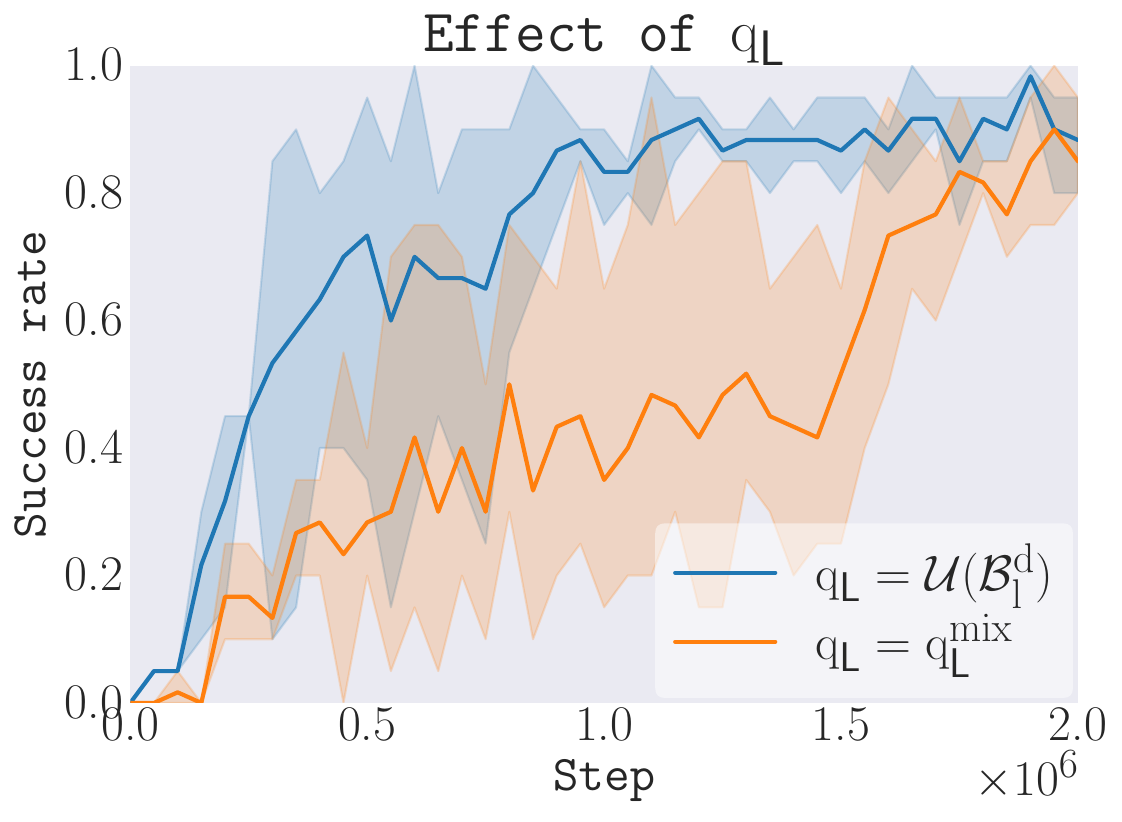}
    \caption{\textbf{Ablation studies in} \envname{cube-double-play-singletask-task2-v0}\textbf{.}
    Moderate values of the inverse temperature $1/\alpha\in[5, 20]$ yield comparable results.
    Adjoint matching achieves better and more stable results than BPTT.
    Using a different $\LatentBCDist{}$
    yields comparable final success rates in this task, consistent with the proposed separation between entropy regularization and behavior regularization.
    Lines show mean success rates, and shading spans the minimum and maximum across runs.
    }
    \label{fig: ablations}
\end{figure}

\textbf{(Q2): The key hyperparameter of \algo{} is the inverse temperature.}
Using a constant set of method-specific hyperparameters for \algo{} across all tasks demonstrates its robustness to different tasks and datasets. Nevertheless, we analyze the sensitivity of \algo{} to the inverse temperature $1/\alpha$ (\Cref{fig: ablations}).
Values of \mbox{$1/\alpha\in[5, 20]$} yield comparable results.
At $1/\alpha=3$, performance drops on both datasets, whereas $1/\alpha=30$ degrades performance on \datasetname{clean} data but remains competitive on \datasetname{noisy} data (\Cref{tab: inverse-temperature-tps}).
We find that \mbox{$1/\alpha=10$} yields stable results for all environments.
We also compare \ourname{} with $Q$-only optimization and TPS in this ablation and find that \ourname{} with \mbox{$1/\alpha\in[5, 20]$} outperforms both of them (\Cref{tab: inverse-temperature-tps}).
In addition, we analyze sensitivity to $\sigma_0$ and find that \ourname{} is robust to this hyperparameter (\Cref{app: experiment-additional-results}).

\textbf{(Q3): Adjoint matching achieves better results and is faster than BPTT using reparametrized gradients.}
A key advantage of adjoint matching is that it circumvents the need to perform BPTT on \eqref{eq: flow-entropy}, which is computationally expensive and numerically unstable.
We validate this empirically and observe that adjoint matching yields more stable training and stronger performance compared to BPTT (\Cref{fig: ablations}) while training $3.6\times$ faster ($123\,\mathrm{it/s}$ vs $34\,\mathrm{it/s}$).

\textbf{(Q4): The latent space successfully decouples entropy regularization from behavior regularization.}
To understand the improvements of \ourname{}, we change $\LatentBCDist{}$ to a distribution with the same support but a different density, which changes the behavior regularization effect but keeps the same latent-space entropy regularization (see \Cref{app: latent-mixture}).
{Both runs retain the uniform entropy reference on the same latent ball. Changing the decoder prior can still alter the learned latent-to-action mapping and learning dynamics, while leaving the entropy regularizer unchanged.}
We observe that changing the decoder prior yields comparable final success rates in this task, consistent with the proposed separation between latent-space entropy regularization and behavior regularization (\Cref{fig: ablations}).

\section{Conclusion}\label{sec: conclusion}

We introduced \algo{}, an entropy-regularized approach to support-constrained flow policy optimization for offline RL.
Our key insight is that support constraints and entropy regularization solve different problems: the former prevent OOD actions, while the latter stabilizes policy extraction within the learned dataset support.
By combining a flow-based decoder with latent-space adjoint matching, \algo{} effectively helps prevent mode collapse, smooths the optimization landscape, and avoids local minima, demonstrating state-of-the-art performance on the OGBench suite with fixed method-specific hyperparameters.

\textbf{Limitations and Future Work.}
We identify several promising avenues for extension.
First, although adjoint matching circumvents the need for BPTT, solving the ODEs needed to sample from the flow model during training remains computationally intensive for deep neural ODEs. This computational cost could be reduced through distillation \citep{sorl}.
Second, by relying on the decoder to enforce support constraints,
\algo{} inherits OOD errors made by the decoder itself.
Integrating behavior cloning methods with stricter support constraints, diagnosing when the BC model generates OOD errors, or applying a pre-trained robust BC model could mitigate this.
In addition, \algo{} uses a simple value-function learning method and an actor-critic architecture similar to those in \cite{fql_park2025}. Both could be improved using ideas from \citep{mao2023svr,xql,liu2024adaptive,agrawalla2025floq}.

\textbf{Broader Impacts.}\label{sec: broader-impacts}
Learning policies from existing datasets may reduce the cost and risks of online data collection in robotics. However, support constraints alone do not guarantee safe behavior, as datasets may contain unsafe actions and learned decoders can make errors. Deployment under distribution shift could therefore cause physical harm. Safety validation and monitoring remain necessary before deploying these policies on physical systems.

\section*{Acknowledgements}

This material is based upon work supported by the Under Secretary of War for Research and Engineering under Air Force Contract No. FA8702-15-D-0001 or FA8702-25-D-B002. In addition,
Zhang, So, and Fan are supported by the National Science Foundation (NSF) CAREER Award
\#CCF-2238030. This work was also supported in part by a grant from Amazon. Any opinions, findings, conclusions or recommendations expressed in this material are those of the author(s) and do not necessarily reflect the views of the Under Secretary of War for Research and Engineering.

\copyright{} 2026 Massachusetts Institute of Technology.

Delivered to the U.S. Government with Unlimited Rights, as defined in DFARS Part 252.227-7013 or 7014 (Feb 2014). Notwithstanding any copyright notice, U.S. Government rights in this work are defined by DFARS 252.227-7013 or DFARS 252.227-7014 as detailed above. Use of this work other than as specifically authorized by the U.S. Government may violate any copyrights that exist in this work.

\newpage
\clearpage

\appendix
\crefalias{section}{appendix}
\crefalias{subsection}{appendix}

\section{Extended Related Work}\label{app: extended_related}
\textbf{Mitigating distribution shift in offline RL.}
A central challenge in offline RL is mitigating distribution shift: learned value functions can overestimate actions that are not in the dataset, causing the policy to select these out-of-distribution actions.
Classical approaches address this issue by either regularizing the policy to remain close to the behavior policy with statistical distances (e.g., KL divergence \citep{wang2018exponentially,peng2019advantage,wu2019behavior,jaques2019way,siegel2020keep,nair2020awac,wang2020critic,kostrikov2022offline}, MMD distance \citep{kumar2019stabilizing} or Wasserstein distance \citep{wu2019behavior}),
or regularizing the value function to be pessimistic on out-of-distribution actions \citep{kumar2020conservative,kostrikov2022offline,lyu2022mildly,mao2023svr,cen2024learning,wang2026beyond}.
Another family of methods addresses the OOD problem by penalizing actions that fall outside the support of the behavior policy \citep{kumar2019stabilizing,wu2022spot,mao2023str,zhang2023constrained}.
However, these methods rely on heuristic sample-based proxies \citep{kumar2019stabilizing} or Gaussian density approximations \citep{wu2022spot,mao2023str,zhang2023constrained} which can fail to capture expressive multimodal behavior policies.

\textbf{Expressive generative policies for offline RL.}
Diffusion \citep{song2020score,ho2020denoising} and flow-matching \citep{lipman2023flow,albergo2022building,liu2023flow} models have recently
been used as expressive alternatives to deterministic or Gaussian policies,
especially in robotic imitation learning where action distributions are often
multimodal \citep{chi2023diffusion,reuss2023goal,pearce2023imitating,wang2023diffusion}.
These policies have also been adapted to offline RL, where the goal is to
improve beyond the behavior policy using a learned critic.
However, because flow and diffusion policies generate actions through an iterative sampling process, critic-based optimization is more challenging than for standard Gaussian actors.
Existing reward-optimization methods for generative policies can be broadly
divided by how they use the learned critic: \emph{critic-evaluation methods},
which use only scalar values $Q(s,a)$ to select or reweight actions, and
\emph{critic-gradient methods}, which use $\nabla_a Q(s,a)$ to directly
improve generated actions.
\begin{enumerate}[label=(\roman*), leftmargin=*, labelindent=0pt, widest=ii, topsep=0pt, parsep=0pt]
    \item \textit{Critic-evaluation} methods include sampling-based methods that imitate the behavior policy and use the critic to sample from the reward-tilted distribution either at test time \citep{chen2023offline,hansen2023idql} or during training, when the sampled actions serve as imitation targets \citep{chen2024score}.
    These methods often reliably improve the quality of the extracted policy at greater computational cost and depend on the quality of the behavior policy.
    Weighted behavioral cloning methods also fall under this category.
    These methods use the critic to assign per-sample weights in the diffusion or flow-matching loss, selectively cloning high-value actions in the dataset \citep{kang2023efficient,ding2024diffusion,Zhang2025EnergyWeightedFlowMatching},
    but generally do not optimize the learned policy towards the reward-tilted distribution \citep{li2026qam}.
    \item Some \textit{critic-gradient} methods combine a behavior-cloning term with a term that increases the critic-estimated value.
    These methods require backpropagation through the iterative sampling process \citep{wang2023diffusion,ding2024consistency,zhang2024entropy,ada2024diffusion}, which can be computationally expensive and unstable \citep{fql_park2025,sorl,li2026qam}. Other methods learn a distilled policy \citep{fql_park2025,tiofack2026gfp,chae2026fac,cheng2026fisher,zhang2026goldenstart} at the expense of policy expressiveness.
    Another line of work uses guidance \citep{dhariwal2021diffusion,ho2022classifier} during the iterative sampling process to favor high-value actions \citep{lu2023contrastive,frans2025diffusion,xu2026vgf}, but these methods rely on approximations that do not have theoretical guarantees \citep{du2023reduce,li2026qam}.
    Most closely related to our optimization mechanism, \citep{li2026qam} provides an efficient way to fine-tune flow-based policies using critic gradients and a KL-regularized reward objective via adjoint matching \citep{domingo2024adjoint}.
    However, unlike critic-evaluation methods where samples from the learned policy are either directly cloned (sampling-based methods) or selectively cloned (weighted BC methods), these methods rely on statistical distance regularization to avoid the OOD problem (e.g., L2 / Wasserstein distance \citep{fql_park2025} or KL regularization \citep{li2026qam}).
    A key challenge is that the regularization weight is a hyperparameter that significantly affects the performance and needs to be tuned for each task and dataset \citep{ogbench_park2025,fql_park2025}.
\end{enumerate}
Our method falls under the \textit{critic-gradient} category as we also use adjoint matching to optimize the policy with critic gradients under KL regularization.
However, to address the OOD problem, \algo{} relies on the latent space to enforce the support constraints by construction instead of using KL regularization toward the behavior policy, as in \citep{li2026qam}, which can be too restrictive in cases where optimal actions have very low probability under the behavior policy \citep{kumar2019stabilizing}.

\textbf{Support constraints in offline RL.}
Instead of addressing the OOD problem with regularization, another line of work proposes to address the OOD problem by learning a bounded latent space that maps to the support of the behavior policy, then performing policy extraction in this latent space to restrict the support of the learned policy by construction \citep{zhou2021plas,wagenmaker2025steering,zhang2026reform}.
Our work follows this support-constrained perspective, but revisits the optimization problem that remains after support has been controlled.
Namely, we find that entropy regularization is essential to prevent policy collapse, and leverage adjoint matching \citep{domingo2024adjoint} to perform entropy-regularized policy extraction in latent space with an \textit{expressive} flow policy that achieves a better performance-entropy tradeoff than Gaussian policies.

\textbf{Entropy-regularized policy extraction.}
The role of entropy regularization has been well identified in \textit{online} RL as a way to improve exploration \citep{ziebart2010modeling,haarnoja2018soft} and policy robustness \citep{ziebart2010modeling,eysenbach2022maximum} and to smooth the optimization landscape \citep{ahmed2019understanding}.
In \textit{offline} RL,
while some works note the benefits of entropy regularization for exploration \citep{ding2024diffusion,dong2025expo},
the importance of entropy regularization has been less emphasized, perhaps due to the implicit entropy regularization effect from various behavior policy regularization techniques.
In our setting, the learned decoder and latent support constraint handle support control, so no separate behavior-policy regularization term is needed. This makes the role of entropy regularization in policy extraction explicit.

\section{Proof of \Cref{thm: ng-target-optimal}} \label{app: ng-target-optimal:proof}

For a Lebesgue-measurable set $X \subseteq \mathbb{R}^d$, let $\lambda(X)$ denote its Lebesgue measure.
We first prove the following technical lemma to express entropy as a constant minus a KL divergence.
\begin{lemma} \label{lem: entropy-as-kl}
    Fix a state $s \in \mathcal{S}$. For brevity, we omit the state dependence in the policy.
    Let $\LatentSpace_q \coloneqq \supp(\LatentBCDist{})\subseteq \LatentSpace{}$ denote the support of $\LatentBCDist{}$.
    Suppose $\LatentSpace_q$ has finite positive Lebesgue measure, i.e., $0 < \lambda(\LatentSpace_q) < \infty$,
    and $\LatentPolicy{}$ has a density and satisfies the latent-space support constraints \eqref{eq: latent-support-constraint}, i.e.,
    \begin{equation}
        \supp(\LatentPolicy{}) \subseteq \supp(\LatentBCDist{}).
    \end{equation}
    Let $\LatentUniform{}$ be the uniform distribution over $\supp(\LatentBCDist{})$, i.e.,
    \begin{equation}
        \LatentUniform{}(z) = \begin{cases}
            1 / \lambda(\LatentSpace_q) & z \in \LatentSpace_q \\
            0 & \text{otherwise}
        \end{cases}.
    \end{equation}
    Then, the entropy of $\LatentPolicy{}$ can be expressed as
    \begin{equation} \label{eq: entropy-as-kl}
        \mathcal{H}(\LatentPolicy{}) = -\KL( \LatentPolicy{} \| \LatentUniform{} ) + \log \lambda(\LatentSpace_q),
    \end{equation}
    where the second term in \eqref{eq: entropy-as-kl} is a constant independent of $\LatentPolicy{}$.
\end{lemma}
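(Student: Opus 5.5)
The plan is a direct computation from the definitions of differential entropy and KL divergence, using the support constraint so that the log of the uniform density is the constant $-\log\lambda(\LatentSpace_q)$ everywhere $\LatentPolicy{}$ puts mass. Write $p$ for the density of $\LatentPolicy{}$ and $S_p \coloneqq \supp(\LatentPolicy{}) \subseteq \LatentSpace_q$. By definition,
\begin{equation*}
    \KL(\LatentPolicy{} \| \LatentUniform{}) = \int_{S_p} p(z) \log \frac{p(z)}{\LatentUniform{}(z)} \diff z .
\end{equation*}
On $S_p$ we have $\LatentUniform{}(z) = 1/\lambda(\LatentSpace_q) > 0$. Hence $\LatentPolicy{} \ll \LatentUniform{}$, the KL divergence is well defined (possibly $+\infty$), and no division by zero occurs.

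The main step is to split the logarithm. Pointwise on $S_p$,
\[
p\log(p/\LatentUniform{}) = p\log p + p\log\lambda(\LatentSpace_q).
\]
Integrating and using $\int_{S_p} p = 1$ gives
\begin{equation*}
    \KL(\LatentPolicy{} \| \LatentUniform{}) = \int_{S_p} p \log p \diff z + \log \lambda(\LatentSpace_q) = -\mathcal H(\LatentPolicy{}) + \log\lambda(\LatentSpace_q).
\end{equation*}
Rearranging yields \eqref{eq: entropy-as-kl}. The second term depends only on $\LatentSpace_q$, so it is independent of $\LatentPolicy{}$.

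The only delicate point is justifying the splitting of the integral, since $\int p\log p$ could a priori fail to be absolutely integrable. I would handle this via the finite-measure assumption. The negative part of $p\log p$ is bounded by $1/e$ on $S_p$, and $\lambda(S_p) \le \lambda(\LatentSpace_q) < \infty$, so $\int (p\log p)^- < \infty$. Therefore $\int p\log p$ is well defined in $(-\infty, +\infty]$. Adding the finite constant $\log\lambda(\LatentSpace_q)$ is then legitimate, and the identity holds in the extended sense: both sides equal $-\infty$ exactly when $\mathcal H(\LatentPolicy{}) = -\infty$. This also gives the upper bound $\mathcal H(\LatentPolicy{}) \le \log\lambda(\LatentSpace_q)$ via $\KL \ge 0$, which will be useful later for \Cref{thm: ng-target-optimal}.
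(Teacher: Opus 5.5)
Your proof is correct and follows the same direct computation as the paper: expand $\KL(\LatentPolicy{}\|\LatentUniform{})$, split the logarithm, and use that $\log\LatentUniform{}$ equals the constant $-\log\lambda(\LatentSpace_q)$ wherever $\LatentPolicy{}$ has mass. Your integrability check via $(p\log p)^-\le 1/e$ on a set of finite measure, which makes the identity hold in the extended sense, is extra rigor the paper omits. Your handling of the cross term is also slightly cleaner: you write it as $\int p\log\LatentUniform{}$, whereas the paper writes $\int\LatentUniform{}\log\LatentUniform{}$, and the two agree only because the logarithm is constant there.
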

\begin{proof}
    Expanding the KL divergence,
    \begin{align}
        \KL( \LatentPolicy{} \| \LatentUniform{} )
        &= \int_{\mathbb{R}^d} \LatentPolicy{}(z) \log \frac{\LatentPolicy{}(z)}{ \LatentUniform{}(z) } \diff z, \\
        &= \int_{\LatentSpace_q} \LatentPolicy{}(z) \log \LatentPolicy{}(z) \diff z - \int_{\LatentSpace_q} \LatentUniform{}(z) \log \LatentUniform{}(z) \diff z, \\
        &= -\mathcal{H}(\LatentPolicy{}) - \int_{\LatentSpace_q} \LatentUniform{}(z) \log\left( \frac{1}{\lambda(\LatentSpace_q)} \right) \diff z, \\
        &= -\mathcal{H}(\LatentPolicy{}) + \log \lambda(\LatentSpace_q).
    \end{align}
    Rearranging the above equation yields the desired result.
\end{proof}

We are now ready to prove \Cref{thm: ng-target-optimal}.
\begin{proof}
    Fix a state $s \in \mathcal{S}$. For brevity, we omit the state dependence in the policy.
    We write $Q^{\LatentPolicy{}}(z)$ for $Q_\phi^{\LatentPolicy{}}(s,z)$, held fixed with respect to the candidate policy.
    Restating the entropy-regularized optimization problem \eqref{eq: entropy-reg-pol-loss} in terms of the policy $\LatentPolicy{}$ directly,
    we wish to solve the following optimization problem over densities supported on $\LatentSpace_q$
    \begin{equation} \label{eq: entropy-reg-pol-loss-latent-repeat}
        \begin{gathered}
        \min_{\LatentPolicy{}} -\mathbb E_{z\sim \LatentPolicy{}}\left[Q^{\LatentPolicy{}}(z)\right] - \alpha \mathcal H(\LatentPolicy{}).
        \end{gathered}
    \end{equation}
    By Lemma~\ref{lem: entropy-as-kl}, the entropy of $\LatentPolicy{}$ can be expressed as
    \begin{equation}
        \mathcal{H}(\LatentPolicy{}) = -\KL( \LatentPolicy{} \| \LatentUniform{} ) + \log \lambda(\LatentSpace_q).
    \end{equation}
    Substituting this into \eqref{eq: entropy-reg-pol-loss-latent-repeat} and dropping the constant term, we now have a KL-regularized optimization problem
    \begin{equation}
        \begin{gathered}
        \min_{\LatentPolicy{}} -\mathbb E_{z\sim \LatentPolicy{}}\left[Q^{\LatentPolicy{}}(z)\right] + \alpha \KL(\LatentPolicy{} \| \LatentUniform{}).
        \end{gathered}
    \end{equation}
    Now, define the partition function $Z$ as
    \begin{equation}
        Z \coloneqq \int_{\mathbb{R}^d} \LatentUniform{}(z) \exp\left( \frac{1}{\alpha} Q^{\LatentPolicy{}}(z) \right) \diff z.
    \end{equation}
    Since $Q^{\LatentPolicy{}}$ is measurable and bounded from above, there exists an $m \in \mathbb{R}$ such that $Q^{\LatentPolicy{}}(z) \leq m$ for all $z \in \LatentSpace_q$. Hence,
    \begin{equation}
        0 < \exp\left( \frac{1}{\alpha} Q^{\LatentPolicy{}}(z) \right) \leq \exp\left( \frac{1}{\alpha} m \right) < \infty,
    \end{equation}
    and thus
    \begin{equation}
        0 < Z \leq \exp\left( \frac{1}{\alpha} m \right) < \infty,
    \end{equation}
    so $\LatentPolicy^\star$, defined as
    \begin{equation}
        \LatentPolicy^\star(z) \coloneqq \frac{1}{Z} \LatentUniform{}(z) \exp\left( \frac{1}{\alpha} Q^{\LatentPolicy{}}(z) \right),
    \end{equation}
    is well-defined.
    Since $\KL( \LatentPolicy{} \| \LatentUniform{} ) = +\infty$ for any $\LatentPolicy{} \centernot{\ll} \LatentUniform{}$, it is sufficient to consider policies $\LatentPolicy{}$ that are absolutely continuous with respect to $\LatentUniform{}$. Note that for $z \in \LatentSpace_q$,
    \begin{equation}
        \frac{ \LatentPolicy{}^\star(z) }{ \LatentUniform{}(z) } = \frac{1}{Z} \exp\left( \frac{1}{\alpha} Q^{\LatentPolicy{}}(z) \right), \qquad \log\left( \frac{ \LatentPolicy{}^\star(z) }{ \LatentUniform{}(z) } \right) = -\log Z + \frac{1}{\alpha} Q^{\LatentPolicy{}}(z).
    \end{equation}
    For any $\LatentPolicy{} \ll \LatentUniform{}$, we have
    \begin{align}
        \KL( \LatentPolicy{} \| \LatentPolicy{}^\star )
        &= \int_{\mathbb{R}^d} \LatentPolicy{}(z) \log \frac{\LatentPolicy{}(z)}{ \LatentPolicy{}^\star(z) } \diff z, \\
        &= \int_{\mathbb{R}^d} \LatentPolicy{}(z) \log \frac{\LatentPolicy{}(z)}{ \LatentUniform{}(z) } \diff z - \int_{\mathbb{R}^d} \LatentPolicy{}(z) \log \frac{ \LatentPolicy{}^\star(z) }{ \LatentUniform{}(z) } \diff z, \\
        &= \KL( \LatentPolicy{} \| \LatentUniform{} ) + \log Z - \frac{1}{\alpha} \int_{\mathbb{R}^d} \LatentPolicy{}(z) Q^{\LatentPolicy{}}(z) \diff z, \\
        &= \KL( \LatentPolicy{} \| \LatentUniform{} ) + \log Z - \frac{1}{\alpha} \mathbb E_{z\sim \LatentPolicy{}}[Q^{\LatentPolicy{}}(z)].
    \end{align}
    Thus, the objective can be written as the KL divergence to $\LatentPolicy{}^\star$ plus a constant:
    \begin{equation}
        -\E_{z \sim \LatentPolicy{}}[Q^{\LatentPolicy{}}(z) - \alpha \KL( \LatentPolicy{} \| \LatentUniform{} )] = \alpha \KL( \LatentPolicy{} \| \LatentPolicy{}^\star ) - \alpha \log Z.
    \end{equation}
    Since $\KL( \LatentPolicy{} \| \LatentPolicy{}^\star ) \geq 0$ for all $\LatentPolicy{}$ with equality if and only if $\LatentPolicy{} = \LatentPolicy{}^\star$, we conclude that $\LatentPolicy{}^\star$ is the unique optimal solution to the optimization problem \eqref{eq: entropy-reg-pol-loss-latent-repeat}.
\end{proof}

\section{More Theoretical Results}

\subsection{The Static Flow Issue and Implicit Entropy Regularization in ReFORM}\label{app: reform-issue}

To enforce the latent-space support constraint \eqref{eq: latent-support-constraint}, ReFORM \citep{zhang2026reform} introduces a reflection term during the integration of the latent flow policy $\muNToL$. The corresponding modified Euler step is defined as:
\begin{equation}\label{eq: reform-modified-euler-step}
    w_{k+1} = \mathbbm{1}\{\hat w_{k+1}\in\mathcal B_l^d\}\hat w_{k + 1} + (1 - \mathbbm{1}\{\hat w_{k+1}\in\mathcal B_l^d\})\left(\hat w_{k+1} - \langle v_\thetaNToL(w_k, k\Delta t; s)\Delta t, n_{k+1}\rangle n_{k+1}\right),
\end{equation}
where $\hat w_{k+1} = w_k + v_\thetaNToL(w_k, k\Delta t; s)\Delta t$ denotes the standard Euler step, $N$ is the total number of integration steps, $k\in\{0, \dots, N - 1\}$, $\Delta t = \frac{1}{N}$, $n_{k+1} = \frac{\hat w_{k+1}}{\|\hat w_{k+1}\|}$, and $\langle\cdot,\cdot\rangle$ represents the inner product. Integrating this modified step yields the final latent variable $z \gets w_{N}$.

While \cite{zhang2026reform} proves that this modified integration step guarantees adherence to the latent support constraint \eqref{eq: latent-support-constraint}, it introduces a degenerate behavior we term ``static flow''. This occurs when the proposed step violates the boundary ($\hat w_{k + 1}\notin\mathcal B_l^d$) and the flow velocity is strictly codirectional with the outward normal vector ($v_\thetaNToL \upuparrows n_{k+1}$). Under these conditions, the reflection term perfectly cancels the velocity step:
\begin{equation}
    \begin{aligned}
        w_{k+1} & = \hat w_{k+1} - \langle v_\thetaNToL(w_k, k\Delta t; s)\Delta t, n_{k+1}\rangle n_{k+1} \\
        & = \left(w_k + v_\thetaNToL(w_k, k\Delta t; s)\Delta t\right) - v_\thetaNToL(w_k, k\Delta t; s)\Delta t = w_k.
    \end{aligned}
\end{equation}
Consequently, the corrected step is stationary under this alignment condition. Rather than following the flow field $v_\thetaNToL$ to the boundary of $\mathcal B_l^d$, the pushforward distribution of $\muNToL$ unnaturally accumulates along the radial trajectory. To illustrate this phenomenon, we train ReFORM on the 2D toy environment introduced in \Cref{fig: ent-q-tradeoff}. The learned latent distribution and its resulting action distribution are visualized in \Cref{fig: reform-analysis}. We empirically observe that the latent mass inappropriately concentrates along two radii, resulting in a suboptimal action distribution.
We also conduct additional experiments in \envname{cube-double} in \Cref{app: experiment-additional-results} to measure how often the boundary correction is activated.

Interestingly, we find that this static flow artifact unintentionally serves as an implicit entropy regularizer, which partially explains ReFORM's empirical success. As demonstrated in \Cref{fig: reform-analysis}, the premature halting of trajectories prevents the latent distribution from collapsing to a single point, artificially maintaining the spread of the distribution.

\begin{figure}[t]
    \centering
    \includegraphics[width=\linewidth]{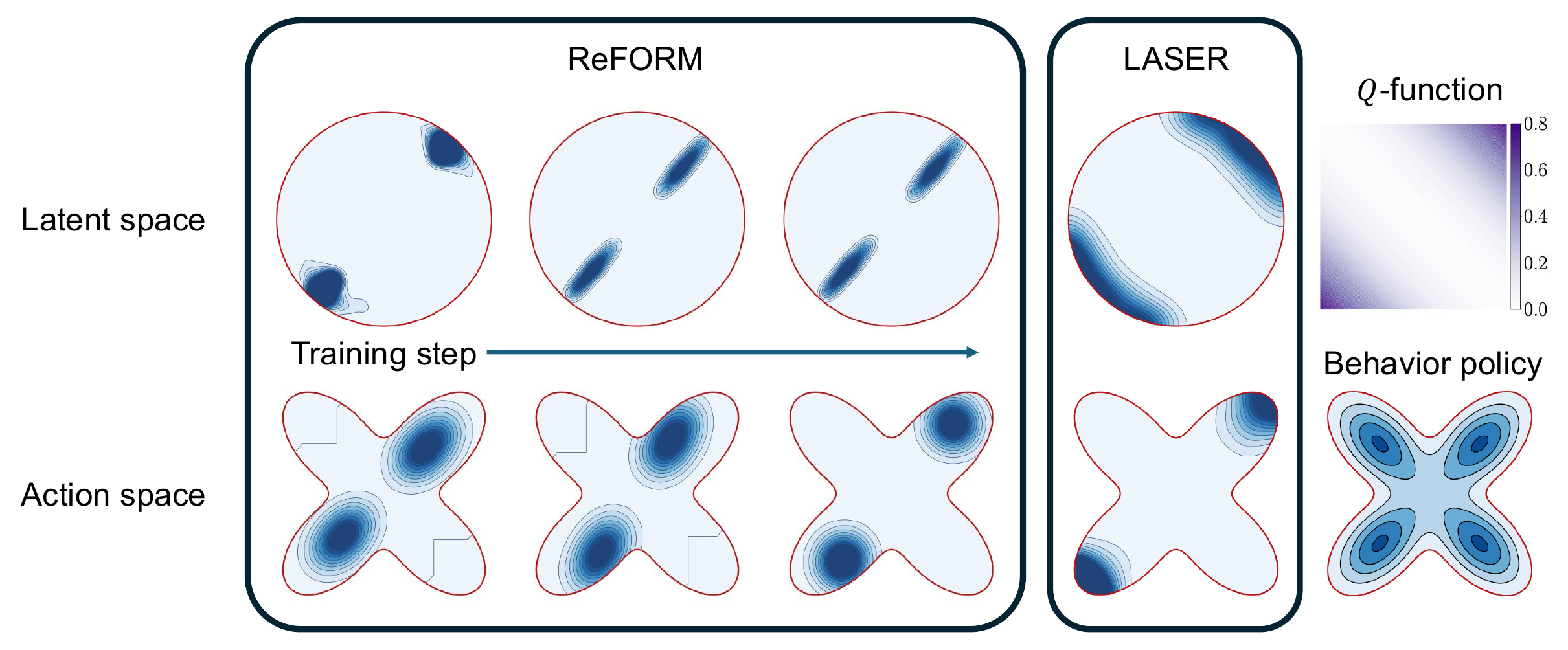}
    \caption{\textbf{ReFORM fails to maximize the $Q$-value because of the ``static flow'' issue.} For a given behavior policy and a ground-truth $Q$-function, the generated latent-space distribution $(\muNToLCondS)_\sharp\BaseDist{}$ concentrates along radial directions aligned with the flow, rather than moving toward the boundary of $\mathcal B_l^d$. This makes the flow static, resulting in suboptimal performance compared with \algo{}.}
    \label{fig: reform-analysis}
\end{figure}

\subsection{The Ground-truth Base Velocity Field from the Base Space to the Latent Space}\label{app: gt-flow-gaussian-uniform}

As mentioned in \Cref{sec: full-algo}, there exists a ground-truth velocity field $v_{\NToL}$ that maps samples from $w\sim\BaseDist{} = \mathcal N(0, \sigma_0^{2} I)$ to $z\sim\mathcal U(\mathcal B_l^d)$. We show here how we can find such a ground-truth velocity field and why we choose not to use it in our implementation.

Since both the base distribution $\BaseDist{} = \mathcal N(0, \sigma_0^2 I)$ and the target distribution $\mathcal U(\mathcal B_l^d)$ are spherically symmetric, the optimal transport (OT) mapping between them operates purely along the radial direction. Let $r_w = \|w\|$ denote the radius of a sample $w$ from the base distribution, and $r_z = \|z\|$ denote the radius in the latent space. The cumulative distribution function (CDF) for the uniform distribution on a $d$-dimensional ball of radius $l$ is $F_\Uniform(r) = (r/l)^d$ for $r \in [0, l]$. Meanwhile, the radius of the Gaussian base distribution follows a scaled $\chi$-distribution with $d$ degrees of freedom, with its CDF denoted as $F_\chi(r)$.

By matching the quantiles of these radial distributions, we can construct the exact analytical mapping $\mu_\NToL: \mathbb{R}^d \to \mathcal B_l^d$ as follows:
\begin{equation}
    \mu_\NToL(w) = \frac{w}{\|w\|} F_\Uniform^{-1}(F_\chi(\|w\|)) = w \frac{l \cdot F_\chi(\|w\|)^{1/d}}{\|w\|}.
\end{equation}
At the origin, we define $\mu_\NToL(0)=0$ by continuity. However, although we have derived an exact analytical mapping $\mu_\NToL$, it is nontrivial to derive the corresponding velocity field $v_{\NToL}(\pl {w_t}, \pl t; \pl s)$, i.e., the instantaneous velocity of a particle currently at location $w_t$ at time $t$.
Under standard optimal transport displacement interpolation, a particle travels in a straight line from its origin $w$ to its destination $\mu_\NToL(w)$ at a constant speed, yielding the trajectory:
\begin{equation}
    w_t = (1 - t)w + t\mu_\NToL(w).
\end{equation}
While the velocity of this specific particle is trivially $v(w) = \mu_\NToL(w) - w$, the velocity field must be expressed independently of the origin $w$. This requires inverting the trajectory equation to express $w$ purely as a function of the current state $w_t$ and time $t$.

Because our mapping $\mu_\NToL$ is purely radial, the vectors $w$ and $w_t$ share the same direction, meaning $\frac{w}{\|w\|} = \frac{w_t}{\|w_t\|}$ for $w\ne0$. Let $r_w = \|w\|$ and $r_t = \|w_t\|$. Taking the norm of the trajectory equation yields the following relationship between the magnitudes:
\begin{equation}
    r_t = (1 - t)r_w + t \cdot l \cdot F_\chi(r_w)^{1/d}.
\end{equation}
To compute the velocity at $w_t$, we must find the inverse function $g_t(r_t)$ that recovers the initial radius $r_w$ given $r_t$ and $t$. For each $t\in[0,1]$, the radial map is strictly increasing, so it has a unique inverse on its image. For $t>0$ and $w_t\ne0$, the exact Eulerian velocity field is:
\begin{equation}
    v^\mathrm{base}_{\NToL}(w_t, t) = \frac{w_t - w}{t} = \left(\frac{\|w_t\| - g_t(\|w_t\|)}{t}\right) \frac{w_t}{\|w_t\|}.
\end{equation}
The values at $t=0$ and $w_t=0$ are defined by continuity.
Thus, although $g_t$ exists, evaluating it generally requires numerical inversion.
For intermediate times $0<t<1$, computing $g_t(\|w_t\|)$ would add a numerical solve at each velocity evaluation. We instead learn $\BaseVelocity^\mathrm{base}$ via flow matching to avoid these repeated solves.

\section{Practical Implementation of \algo{}}\label{app: implementation}

We provide a high-level summary of \algo{} in \Cref{alg: laser}.

\begin{algorithm}
\caption{\algo{}} \label{alg: laser}
\begin{algorithmic}[1]
\While{not converged}
  \LComment{Train Critic}
  \State Sample batch $\{ (s, a, r, s') \} \sim \mathcal{D}$
  \State $w \sim \mathcal{N}(0, \sigma_0^2 I_d)$
  \State $a' = \mu_{\thetaLToA}^{s'}(\mu_{\thetaNToL}^{s'}(w))$
  \State Update $\textcolor{TrainingColor}{\phi}$ to minimize $L(\textcolor{TrainingColor}{\phi}) = \mathbb{E}[ \big( r(s,a) + \gamma Q^{\FullPolicy}_{\hat{\phi}}(s', a') - Q^{\FullPolicy}_{\textcolor{TrainingColor}{\phi}}(s, a) \big)^2 ]$ \quad \eqref{eq: critic-loss}
  \Statex
  \LComment{Train vector field $v_{\textcolor{TrainingColor}{\thetaLToA}}$ in decoder $\mu_{\textcolor{TrainingColor}{\thetaLToA}}$}
  \State $x^0 \sim \LatentBCDist{}$
  \State $x^1 \gets a$
  \State $t \sim \Uniform([0, 1])$
  \State $x^t \gets (1-t) x^0 + t x^1$
  \State Update $\textcolor{TrainingColor}{\thetaLToA}$ to minimize $L_\LToA(\textcolor{TrainingColor}{\thetaLToA}) = \mathbb{E}\left[ \norm{v_{\textcolor{TrainingColor}{\thetaLToA}}(x^t, t; s) - (x^1 - x^0)}^2 \right]$ \quad \eqref{eq: flow-matching}
  \Statex
  \LComment{Train vector field $\textcolor{TrainingColor}{\vNToLBase}$ to match $\LatentUniform{}$}
  \State $x^0 \sim \BaseDist{}{}$
  \State $x^1 \sim \LatentUniform{}$
  \State $t \sim \Uniform([0, 1])$
  \State $x^t \gets (1-t) x^0 + t x^1$
  \State Update $\textcolor{TrainingColor}{\vNToLBase}$ to minimize $\mathbb{E}\left[ \norm{\textcolor{TrainingColor}{\vNToLBase}(x^t, t) - (x^1 - x^0)}^2 \right]$
  \Statex
  \LComment{Train Latent-Space Policy using Adjoint Matching}
  \State Set $\Delta t=1/N$ and $t_i=i\Delta t$ for $i=0,\dots,N$
  \State $w_0 \sim \mathcal{N}(0, \sigma_0^2 I)$
  \State Forward simulate using \(\eqref{eq: forward-sde-EM}\) to obtain \(\{w_{t_i}\}_{i=1}^N\), held fixed during policy regression.
  \State $\tilde{g}_1 \gets -\frac{1}{\alpha} \nabla_{w_1} Q_{\hat\phi}^{\LatentPolicy{}}(s, w_1)$
  \State Compute $\tilde g_{t_i}$ for $i=N-1,\dots,0$ using \eqref{eq: lean-adjoint-EM}
  \State Compute detached regression targets $\hat v^\star_{t_i}$ for $i=0,\dots,N-1$ using \eqref{eq: am-target-discrete}
  \State Update $v_{\textcolor{TrainingColor}{\thetaNToL}}$ to minimize\\
    \makebox[\linewidth][c]{${L}_\mathrm{AM}(\textcolor{TrainingColor}{\thetaNToL}) = \mathbb{E} \left[ \frac{1}{2N}\sum_{i=0}^{N-1} \norm[\Big]{ \frac{2}{\bar\sigma_{t_i}} \Big( v_{\textcolor{TrainingColor}{\thetaNToL}}(w_{t_i}, t_i; s) - \hat{v}^\star_{t_i} \Big) }^2 \right]$ \quad\eqref{eq: am}}
\EndWhile
\State \Return policy $\FullPolicy{}$
\end{algorithmic}
\end{algorithm}

\paragraph{Flow-matching Decoder Learning.}
We parameterize the velocity field $v_\thetaLToA$ of the decoder $\muLToA$ with a multilayer perceptron (MLP), and learn it via the flow-matching loss \eqref{eq: flow-matching}. The prior latent-space distribution is chosen to be a bounded uniform distribution over a $d$-ball with radius $l$, i.e., $\LatentBCDist{} = \mathcal U(\mathcal B_l^d)$. After training, we sample $z\sim\LatentBCDist{}$ and use Euler integration to get the action samples:
\begin{equation}
    z_0 \gets z, \quad z_{(k+1)\Delta t} = z_{k\Delta t} + v_\thetaLToA(z_{k\Delta t}, k\Delta t; s)\Delta t,
\end{equation}
where $N$ is the total number of integration steps, $\Delta t = \frac{1}{N}$, $k \in \{0, \dots, N-1\}$, and the final generated action is $a\gets z_1$.

\paragraph{Flow matching for $\LatentUniform{}$.}
Similarly, the velocity field $\vNToLBase{}$ is parameterized by an MLP and learned via the flow-matching loss \eqref{eq: flow-matching}, with prior distribution $\BaseDist{} = \mathcal N(0, \sigma_0^2I)$ and the desired pushforward $\LatentUniform{} = \mathcal U(\mathcal B_l^d)$. Note that $\vNToLBase{}$ is state-independent.

\paragraph{Critic Learning.}
The critic is learned via the standard TD learning objective \eqref{eq: critic-loss} with an ensemble of $K$ critics. In some environments, we found that using a pessimistic target-value backup \citep{ghasemipour2022so} can improve performance. In that case, the loss function becomes
\begin{equation}
    L(\phi_j) = \mathbb E_{(s, a, s')\sim\mathcal D, a'\sim\pi_\textsf{A}(\cdot|s')}\left[\left(Q_{\phi_j}(s, a) - r - \gamma\left(Q_{\hat\phi}^\mathrm{mean}(s', a') - \rho Q_{\hat\phi}^\mathrm{std}(s', a')\right)\right)^2\right],
\end{equation}
where $\phi_j$ denotes the parameters of the $j$th critic, $Q_{\hat\phi}^\mathrm{mean}(s', a')\coloneqq\frac{1}{K}\sum_k Q_{\hat \phi_{k}}(s', a')$, $Q_{\hat\phi}^\mathrm{std}(s', a') = \sqrt{\frac{1}{K}\sum_k\left(Q_{\hat\phi_{k}}(s', a') - Q_{\hat\phi}^\mathrm{mean}(s', a')\right)^2}$, and $\rho$ is the pessimistic backup weight.

\paragraph{Entropy-regularized Policy Optimization via Adjoint Matching.}
We adapt the adjoint-matching implementation \citep{domingo2024adjoint}, retaining its endpoint regularization and final noiseless reference-flow step. We discretize the forward SDE and backward adjoint ODE on $t_i=i\Delta t$, with $\Delta t=1/N$ and $N=10$. To avoid singular endpoint coefficients, we replace the denominator $t_i$ by $t_i+\Delta t$ and use the regularized noise schedule $\bar\sigma_{t_i}=\sigma_0\sqrt{2(1-t_i+\Delta t)/(t_i+\Delta t)}$. Starting from $w_0\sim\mathcal N(0,\sigma_0^2I)$, the first $N-1$ steps use Euler integration. The final step uses the reference flow and then projects its output onto the latent ball, ensuring $w_1\in\mathcal B_l^d$:
\begin{equation} \label{eq: forward-sde-EM}
    \begin{aligned}
        w_{t_{i+1}} &\gets w_{t_i} + \left(2v_\thetaNToL(w_{t_i}, t_i; s) - \frac{w_{t_i}}{t_i+\Delta t}\right)\Delta t + \sqrt{\Delta t}\,\bar\sigma_{t_i}n_i,\quad i=0,\dots,N-2,\\
        w_1 &\gets \Pi_{\mathcal B_l^d}\!\left(w_{t_{N-1}} + \Delta t\,\vNToLBase(w_{t_{N-1}},t_{N-1})\right),
    \end{aligned}
\end{equation}
where $n_i\sim\mathcal N(0,I)$ and $\Pi_{\mathcal B_l^d}$ is Euclidean projection. We initialize the lean adjoint with $\tilde g_1=-\nabla_{w_1}Q_{\hat\phi}^{\LatentPolicy{}}(s,w_1)/\alpha$. Writing $b_i(w)=2\vNToLBase(w,t_{i+1})-w/t_{i+1}$, the backward update is
\begin{equation}\label{eq: lean-adjoint-EM}
    \tilde g_{t_i}\gets\tilde g_{t_{i+1}}+\Delta t\,\mathrm{VJP}\left(\nabla_w b_i(w_{t_i}),\tilde g_{t_{i+1}}\right),\qquad i=N-1,\dots,0,
\end{equation}
where $\mathrm{VJP}(J,g)=J^\top g$. Regression uses only $t_0,\dots,t_{N-1}$, excluding $t_N=1$, with targets
\begin{equation}\label{eq: am-target-discrete}
    \hat v^\star_{t_i}=\operatorname{sg}\!\left(\vNToLBase(w_{t_i},t_i)-\frac{\bar\sigma_{t_i}^2}{2}\tilde g_{t_i}\right),\qquad i=0,\dots,N-1,
\end{equation}
where $\operatorname{sg}$ denotes stop-gradient. The sampled states and regression targets are held fixed when differentiating the loss in \Cref{alg: laser}, so gradients pass only through $v_\thetaNToL$.
To stabilize training, we clip terminal latent critic gradients before initializing $\tilde g_1$. For each minibatch, we compute the Euclidean norm of each sample's gradient $\nabla_{w_1}Q_{\hat\phi}^{\LatentPolicy{}}(s,w_1)$ and rescale gradients whose norms exceed the minibatch's 90th percentile so that their norms equal that threshold, while preserving their directions. We then divide all gradients by their mean norm, floored at $10^{-6}$, before multiplying by $-1/\alpha$. This operation leaves environment rewards unchanged and is separate from clipping network-parameter gradients.
For critic targets and evaluation, we numerically integrate the latent policy ODE and project its terminal output before decoding. Denoting the numerical ODE map by $\widehat{\mu}_{\thetaNToL}^{s}$, the deployed latent policy is $(\Pi_{\mathcal B_l^d}\circ\widehat{\mu}_{\thetaNToL}^{s})_\sharp\BaseDist{}$. Projection enforces latent support, but if it places positive probability on the boundary, the resulting law has a singular component and no Lebesgue density. Thus, the entropy identity \eqref{eq: flow-entropy} and the optimality characterization in \Cref{thm: ng-target-optimal} do not directly apply to this projected law. The projected implementation is a practical approximation to the density-based formulation.

\section{Details of the Reach-avoid Example in \Cref{fig: point-pos}}\label{app: point-pos-detail}

The agent starts in one of two regions at the bottom and must reach the goal at the top. Actions are two-dimensional velocities. If a proposed next state lies inside an obstacle, we set $s' = s$. The ``No Entropy Regularization'' baseline is constructed by replacing the adjoint matching component in \algo{} with plain gradient ascent, and the generated latent samples $z$ are squashed inside the ball $\mathcal B_l^d$ to satisfy the support constraint. The ``No Support Constraint'' baseline is constructed with a QAM \citep{li2026qam} model with a large inverse temperature $\tau$ that drives the action distribution away from the behavior policy distribution.

\section{Experiments}\label{app: experiments}

\subsection{Environments}

We conduct experiments on the widely used OGBench offline RL benchmark \citep{ogbench_park2025}. Although OGBench is designed for offline goal-conditioned RL, it provides the \envname{-singletask} flag for standard offline RL. We use 4 environments in the main comparison, including one locomotion environment (\envname{antmaze}) and three manipulation environments (\envname{cube-single}, \envname{cube-double}, and \envname{scene}), each with 5 tasks. For each task, we consider 2 datasets, yielding a total of 40 tasks.

We consider the following tasks with \datasetname{clean} datasets, i.e., the demonstrations are randomly generated by an expert policy:
\begin{itemize}[leftmargin=1cm]
    \item \envname{antmaze-large-navigate-singletask-task\{1, 2, 3, 4, 5\}-v0}
    \item \envname{cube-single-play-singletask-task\{1, 2, 3, 4, 5\}-v0}
    \item \envname{cube-double-play-singletask-task\{1, 2, 3, 4, 5\}-v0}
    \item \envname{scene-play-singletask-task\{1, 2, 3, 4, 5\}-v0}
\end{itemize}
In addition, we consider the following tasks with \datasetname{noisy} datasets, i.e., the demonstrations are generated by a highly noisy policy:
\begin{itemize}[leftmargin=1cm]
    \item \envname{antmaze-large-explore-singletask-task\{1, 2, 3, 4, 5\}-v0}
    \item \envname{cube-single-noisy-singletask-task\{1, 2, 3, 4, 5\}-v0}
    \item \envname{cube-double-noisy-singletask-task\{1, 2, 3, 4, 5\}-v0}
    \item \envname{scene-noisy-singletask-task\{1, 2, 3, 4, 5\}-v0}
\end{itemize}
More details of the environments and datasets can be found in \cite{ogbench_park2025}.

\subsection{Baselines}

In this section, we briefly introduce the details of each baseline.

\paragraph{\baselinename{FQL}.}
\baselinename{FQL} \citep{fql_park2025} is an algorithm that uses the statistical distance regularization objective \eqref{eq: statistical-distance-reg-loss}, using a squared action-matching penalty that upper-bounds the squared 2-Wasserstein distance to its learned BC policy. It begins by training the decoder $\muLToACondS{}$ via the flow-matching loss \eqref{eq: flow-matching}, assuming a standard normal latent distribution $\LatentBCDist{} = \mathcal N(0, I)$. Subsequently, it fine-tunes this decoder by learning a \textit{one-step} distilled policy $\mu_\thetaLToAPrime(\pl z, \pl s): \LatentSpace{}\times\mathcal S\to\ActionSpace{}$. This policy is optimized to maximize expected return while remaining anchored to the behavior cloning (BC) decoder $\muLToACondS{}$. Specifically, the one-step policy minimizes the following loss:
\begin{equation}
    \begin{aligned}
        L(\thetaLToAPrime) &= \mathbb E_{s\sim D, z\sim \LatentBCDist{}}\left[-Q^{\mu_\thetaLToAPrime}_\phi(s, \mu_\thetaLToAPrime(z, s))\right] \\
        &\quad + \alpha \mathbb E_{s\sim D, z\sim \LatentBCDist{}}\left[\|\mu_\thetaLToAPrime(z, s) - \muLToACondS(z)\|^{2}\right],
    \end{aligned}
\end{equation}
where the first term optimizes task performance, and the second term is an upper-bound surrogate for their squared 2-Wasserstein distance \citep{fql_park2025}. Although the one-step policy is less expressive than the original flow policy \citep{sorl}, training this policy to maximize the $Q$-value avoids the BPTT process. However, the Wasserstein distance regularization is not enough to avoid OOD actions \citep{zhang2026reform}.

\paragraph{\baselinename{QAM}.}
\baselinename{QAM} \citep{li2026qam} is an algorithm that uses the statistical distance regularization objective \eqref{eq: statistical-distance-reg-loss}, instantiating $D(\pi_\theta\;\|\;\BehaviorPol{})$ as the KL divergence. It begins by training the decoder $\muLToACondS{}$ via the flow-matching loss \eqref{eq: flow-matching}, assuming a standard normal latent distribution $\LatentBCDist{} = \mathcal N(0, I)$. Subsequently, it fine-tunes the decoder via adjoint matching to align with the target tilted distribution $\pi_\beta(\cdot|s)\exp(\tau Q^\muLToA(s, \cdot))$. This procedure is mathematically equivalent to optimizing the following KL-divergence-regularized objective:
\begin{equation}
    L(\theta) = -\mathbb E_{s\sim \mathcal{D}} \left[ \mathbb E_{a\sim \pi_\theta(\cdot|s)}\left[Q^\muLToA(s, a)\right] - \frac{1}{\tau} D_{\mathrm{KL}}(\pi_\theta(\cdot|s) \;\|\; \pi_\beta(\cdot|s)) \right],
\end{equation}
where $\pi_\theta(\cdot|s) = (\mu_\thetaLToAPrime^s)_\sharp \LatentBCDist{}$, and $\mu_\thetaLToAPrime^s$ is the fine-tuned flow decoder. \baselinename{QAM} avoids the BPTT process by leveraging the adjoint matching technique, similar to \ourname{}, but the KL divergence regularization can limit the policy improvement \citep{zhang2026reform}.

\paragraph{\baselinename{QAM-E}.}
\baselinename{QAM-E} \citep{li2026qam} extends \baselinename{QAM} with an edit policy. After fine-tuning the decoder $\mu_\thetaLToAPrime^s$, it learns an edit policy $\pi_\omega(\pl a|\pl s, \pl a): \mathcal S\times\ActionSpace{}\to\Delta(\ActionSpace{})$ that modifies the action generated by $\pi_\theta(\cdot|s) = (\mu_\thetaLToAPrime^s)_\sharp \LatentBCDist{}$ by at most $\sigma_a$ to further improve performance. In practice, the edit policy $\pi_\omega$ is parameterized as a squashed Gaussian policy that only outputs values in $[-\sigma_a, \sigma_a]$. Empirically, \baselinename{QAM-E} outperforms \baselinename{QAM} \citep{li2026qam}, but it is sensitive to $\sigma_a$, and the edit policy distribution may generate OOD actions.

\paragraph{\baselinename{IFQL}.}
Presented in \citep{fql_park2025}, \baselinename{IFQL} is the flow version of the implicit diffusion Q-learning (IDQL) algorithm \citep{hansen2023idql}. Similarly to the aforementioned algorithms, it first learns the decoder $\muLToACondS{}$ via the flow-matching loss \eqref{eq: flow-matching}, assuming $\LatentBCDist{} = \mathcal N(0, I)$. However, instead of learning the policy $Q$-function with an actor-critic structure with loss \eqref{eq: critic-loss}, it learns an approximately optimal critic via implicit Q-learning (IQL) \citep{kostrikov2022offline} using the following losses:
\begin{align}
    L(\phi) &= \mathbb E_{(s, a, s')\sim\mathcal D}\left[(Q_\phi(s, a) - r - \gamma V_\xi(s'))^2\right], \\
    L(\xi) &= \mathbb E_{(s, a)\sim\mathcal D}\left[L_2^\kappa(Q_{\hat\phi}(s, a) - V_\xi(s))\right],
\end{align}
where $L_2^\kappa(u) = |\kappa - \mathbbm{1}(u < 0)|u^2$ is the expectile regression loss. We use $\kappa=0.9$ for all tasks. The bootstrap term $\gamma V_\xi(s')$ is masked out for terminal transitions. To extract a policy from the learned $Q$-function and the decoder, \baselinename{IFQL} uses best-of-$N$ sampling with $N=32$ candidates:
\begin{equation}
    \begin{gathered}
    i^\star=\argmax_{i\in\{1,\dots,N\}} Q(s,a_i),\qquad a^\star=a_{i^\star},\\
    a_1,\dots,a_N\overset{\mathrm{i.i.d.}}{\sim}(\muLToACondS)_\sharp\LatentBCDist{}.
    \end{gathered}
\end{equation}
\baselinename{IFQL} satisfies the support constraint by construction, but the candidate search is inefficient in high-dimensional spaces with a complex $Q$-function.

\paragraph{\baselinename{DSRL}.}
\baselinename{DSRL} \citep{wagenmaker2025steering} is a recently proposed latent-space RL algorithm, which has several key differences compared with \ourname{}. First, the latent distribution of the decoder is a standard Gaussian in \baselinename{DSRL}, i.e., $\LatentBCDist{} = \mathcal N(0, I)$. Because the support of $\LatentBCDist{}$ is unbounded, steering $\LatentBCDist{}$ might generate a distribution that has high density far from the origin, where the samples are not well represented during the training of the decoder. To avoid this problem, \baselinename{DSRL} parameterizes the latent policy $\LatentPolicy{}$ with a squashed Gaussian distribution that only generates samples within the range of $[-\sigma_z, \sigma_z]$, where $\sigma_z$ is a hyperparameter that requires fine-tuning for each task. \baselinename{DSRL} follows an SAC-style loss \citep{haarnoja2018soft} and explicitly regularizes the entropy, which is straightforward for a Gaussian latent policy. However, because of the Gaussian parameterization, \baselinename{DSRL} has a suboptimal entropy-performance tradeoff as shown in \Cref{fig: ent-q-tradeoff}.

\paragraph{\baselinename{ReFORM}.}
\baselinename{ReFORM} \citep{zhang2026reform} is another recently proposed latent-space RL algorithm, which also uses a flow latent policy. \baselinename{ReFORM} learns its BC decoder $\muLToACondS$ in the same way as \ourname{}.
However, unlike \ourname{}, \baselinename{ReFORM} does not train the latent policy via adjoint matching, but via plain gradient descent. In addition, \baselinename{ReFORM} does not have the explicit entropy regularization term, resulting in the following loss:
\begin{equation}
    L_{\NToL}(\thetaNToL) = -\mathbb E_{s\sim\mathcal D, w\sim \BaseDist{}}\left[Q^{\LatentPolicy{}}(s, \muNToL(w; s))\right].
\end{equation}
Directly using gradient descent on this loss cannot guarantee satisfaction of the latent-space support constraint \eqref{eq: latent-support-constraint}, as the flow $\muNToL{}$ can drive the samples outside the closed ball $\mathcal B_l^d$. To address this issue, \baselinename{ReFORM} modifies the Euler integration step to \eqref{eq: reform-modified-euler-step} when integrating the flow $\muNToL{}$, and proves that this modified step guarantees satisfaction of the latent-space support constraint. However, as discussed in \Cref{app: reform-issue}, this modified step
can cause ``static flow'' when a proposal leaves the ball and the velocity aligns with the outward normal. In our toy example, this acts as implicit entropy regularization but yields a suboptimal action distribution.

\subsection{Hyperparameters}

We organize the hyperparameters into three distinct categories. \textbf{Fixed common hyperparameters} (\Cref{tab: common-hyperparams}) are shared across all algorithms and remain constant across all tasks. We do not fine-tune these, but instead adopt the standard values established in prior work \citep{fql_park2025,li2026qam,zhang2026reform}. \textbf{Environment-specific common hyperparameters} (\Cref{tab: env-specific-common-hyperparams}) are also shared across algorithms but vary by environment and dataset to optimize baseline performance. Following \cite{li2026qam}, we perform a minimal sweep over two candidate values for these settings: 1 or 5 for the action chunking length, and 0 or 0.5 for the pessimistic backup weight. \textbf{Environment-specific unique hyperparameters} (\Cref{tab: env-specific-unique-hyperparams}) are specific to individual baselines (namely \baselinename{FQL}, \baselinename{DSRL}, \baselinename{QAM}, and \baselinename{QAM-E}) and require per-environment tuning. For these, we rely on the optimal values reported in their respective papers or tune them empirically. One notable exception is the edit scale $\sigma_a$ for \baselinename{QAM-E}. While the original authors sometimes set $\sigma_a = 0$ \citep{li2026qam}, making \baselinename{QAM-E} identical to \baselinename{QAM}, we enforce a minimum value of $\sigma_a = 0.01$ to ensure the algorithms remain distinct. Finally, while \ourname{} introduces two unique hyperparameters, we keep them strictly fixed across all tasks. Specifically, we set $1/\alpha=10$ and choose $\sigma_0$ such that $\mathrm{Pr}_{w\sim\mathcal N(0, \sigma_0^2 I)}(\|w\|\leq l) = 0.999$.
For \ourname{} and \baselinename{ReFORM}, the latent ball radius is $l=\sqrt d$, where $d$ is the action or action chunk dimension \citep{zhang2026reform}.

\begin{table}[t]
    \centering
    \caption{\textbf{Fixed common hyperparameters.} These hyperparameters are shared by \ourname{} and all the baselines and are fixed in all environments.}
    \label{tab: common-hyperparams}
    \begin{tabular}{l|l}
        \toprule
        Hyperparameter & Value \\
        \midrule
        Learning rate & 0.0003 \\
        Optimizer & Adam \citep{adam} \\
        Maximum gradient norm & 1 (\baselinename{QAM}, \baselinename{QAM-E}), 10 (others) \\
        Target network smoothing coefficient & $0.005$ \\
        Discount factor $\gamma$ & $0.995$ \\
        MLP dimensions & $[512, 512, 512, 512]$ \\
        Nonlinearity & GELU \citep{hendrycks2016gelu} \\
        Flow integration steps & $10$ \\
        Flow time sampling distribution & $\mathcal U[0, 1]$ \\
        Minibatch size & $256$ \\
        Number of critics in the ensemble & 8 \\
        $Q$-function aggregation method & mean \\
        Actor layer normalization \citep{ba2016layer} & False \\
        Critic layer normalization \citep{ba2016layer} & True \\
        \bottomrule
    \end{tabular}
\end{table}

\begin{table}[t]
    \centering
    \caption{\textbf{Environment-specific common hyperparameters.} These hyperparameters are shared by \ourname{} and all baselines, but vary by environment and dataset.}
    \label{tab: env-specific-common-hyperparams}
    \begin{tabular}{l|c|cccc}
        \toprule
        Environment & Dataset & Action chunk length & Pessimistic backup weight $\rho$ \\
        \midrule
        \envname{antmaze-large-navigate} & \datasetname{clean} & 1 & 0.5 \\
        \envname{antmaze-large-explore} & \datasetname{noisy} & 1 & 0.5 \\
        \envname{cube-single-play} & \datasetname{clean} & 5 & 0.5 \\
        \envname{cube-single-noisy} & \datasetname{noisy} & 1 & 0 \\
        \envname{cube-double-play} & \datasetname{clean} & 5 & 0.5 \\
        \envname{cube-double-noisy} & \datasetname{noisy} & 1 & 0 \\
        \envname{scene-play} & \datasetname{clean} & 5 & 0.5 \\
        \envname{scene-noisy} & \datasetname{noisy} & 5 & 0.5 \\
        {\envname{puzzle-4x4-play}} & {\datasetname{clean}} & {5} & {0.5} \\
        \bottomrule
    \end{tabular}
\end{table}

\begin{table}[t]
    \centering
    \caption{\textbf{Environment- and dataset-specific unique hyperparameters.} These hyperparameters are specific to individual algorithms and are tuned by environment and dataset. The method-specific hyperparameters of \ourname{}, \baselinename{ReFORM}, and \baselinename{IFQL} remain fixed across environments, while their common settings follow \Cref{tab: env-specific-common-hyperparams}.}
    \label{tab: env-specific-unique-hyperparams}
    \begin{tabular}{l|c|cccc}
        \toprule
        Environment & Dataset & \baselinename{FQL} ($\alpha$) & \baselinename{DSRL} ($\sigma_z$) & \baselinename{QAM} ($\tau$) & \baselinename{QAM-E} ($\tau, \sigma_a$) \\
        \midrule
        \envname{antmaze-large-navigate} & \datasetname{clean} & 3.0 & 1.25 & 10.0 & (1.0, 0.1) \\
        \envname{antmaze-large-explore} & \datasetname{noisy} & 0.3 & 1.25 & 10.0 & (1.0, 0.1) \\
        \envname{cube-single-play} & \datasetname{clean} & 300 & 0.5 & 1.0 & (1.0, 0.01) \\
        \envname{cube-single-noisy} & \datasetname{noisy} & 30 & 0.5 & 1.0 & (3.0, 0.5) \\
        \envname{cube-double-play} & \datasetname{clean} & 300 & 1.5 & 1.0 & (1.0, 0.01) \\
        \envname{cube-double-noisy} & \datasetname{noisy} & 30 & 1.5 & 1.0 & (1.0, 0.01) \\
        \envname{scene-play} & \datasetname{clean} & 300 & 0.75 & 1.0 & (1.0, 0.01) \\
        \envname{scene-noisy} & \datasetname{noisy} & 30 & 0.75 & 1.0 & (1.0, 0.01) \\
        {\envname{puzzle-4x4-play}} & {\datasetname{clean}} & {1.0} & {1.0} & {30.0} & {(0.1, 0.9)} \\
        \bottomrule
    \end{tabular}
\end{table}

\subsection{Computational Resources}\label{app: computation-resources}

Most of our experiments are run on a 13th Gen Intel(R) Core(TM) i7-13700KF CPU with 64 GB of RAM and an NVIDIA GeForce RTX 4090 GPU.
\Cref{tab: training-time} reports training throughput. For a fixed iteration budget, \ourname{} requires about $12\%$ more training time than \baselinename{ReFORM} and $33\%$ more than \baselinename{QAM}, while its throughput is comparable to \baselinename{QAM-E} ($123$ vs $125\,\mathrm{it/s}$). The additional work includes computing lean adjoints and learning a latent policy alongside the decoder. Note that the lean-adjoint solve is needed only during offline training. At deployment, actions are generated by integrating the learned latent flow and then the decoder, without adjoints or critic gradients.

\begin{table}[t]
    \centering
    \caption{\textbf{Training throughput comparison.} We report the number of iterations per second (it/s) for \ourname{} and all the baselines.}
    \label{tab: training-time}
    \begin{tabular}{c|ccccccc}
        \toprule
        Algorithm & \baselinename{FQL} & \baselinename{QAM} & \baselinename{QAM-E} & \baselinename{IFQL} & \baselinename{DSRL} & \baselinename{ReFORM} & \ourname{} \\
        \midrule
        it/s & 220 & 163 & 125 & 210 & 165 & 138 & 123 \\
        \bottomrule
    \end{tabular}
\end{table}

\subsection{Additional Results}\label{app: experiment-additional-results}

\paragraph{Ablation study of entropy regularization.}
To test the efficacy of entropy regularization beyond the simple environment introduced in \Cref{sec: er-essential}, we extend the entropy regularization comparison to \envname{cube-single} and \envname{cube-double} environments with both \datasetname{clean} and \datasetname{noisy} datasets. \baselinename{NoER} uses the same latent support constraint, network architectures, critic design, and other training settings as \ourname{}, but directly maximizes $Q$ through the latent flow using reparametrized gradients without entropy regularization. \baselinename{ReFORM} instead uses a reflected flow with implicit entropy regularization (\Cref{app: reform-issue}), while \ourname{} optimizes an explicitly entropy-regularized objective via adjoint matching. \Cref{tab: entropy-ablation} reports the results for all five tasks in each environment with both \datasetname{clean} and \datasetname{noisy} datasets. We observe that both implicit (\baselinename{ReFORM}) and explicit (\ourname{}) entropy regularization help and significantly improve on \baselinename{NoER}, and explicit entropy regularization (\ourname{}) performs better than implicit entropy regularization (\baselinename{ReFORM}).

\begin{table}[!htbp]
    \begin{center}
    \caption{\textbf{Entropy regularization ablation on 20 \envname{cube-single} and \envname{cube-double} tasks.} Success rates (\%) are reported as mean $\pm$ standard deviation across 3 seeds, with 50 evaluation runs per seed. Bold denotes results at or above $95\%$ of the best mean in each setting.}
    \label{tab: entropy-ablation}
    \begin{tabular}{l|c|cc|c}
        \toprule
        Task & Dataset & \baselinename{NoER} & \baselinename{ReFORM} & \ourname{} \\
        \midrule
        \envname{cube-single-play-singletask-task1-v0} & \datasetname{clean} & $69${\tiny $\pm27$} & $\mathbf{99}${\tiny $\pm2$} & $\mathbf{100}${\tiny $\pm0$} \\
        \envname{cube-single-play-singletask-task2-v0} & \datasetname{clean} & $59${\tiny $\pm36$} & $\mathbf{96}${\tiny $\pm3$} & $\mathbf{100}${\tiny $\pm0$} \\
        \envname{cube-single-play-singletask-task3-v0} & \datasetname{clean} & $87${\tiny $\pm5$} & $\mathbf{98}${\tiny $\pm3$} & $\mathbf{100}${\tiny $\pm0$} \\
        \envname{cube-single-play-singletask-task4-v0} & \datasetname{clean} & $51${\tiny $\pm18$} & $87${\tiny $\pm2$} & $\mathbf{99}${\tiny $\pm1$} \\
        \envname{cube-single-play-singletask-task5-v0} & \datasetname{clean} & $74${\tiny $\pm14$} & $86${\tiny $\pm13$} & $\mathbf{93}${\tiny $\pm5$} \\
        \midrule
        \envname{cube-single-noisy-singletask-task1-v0} & \datasetname{noisy} & $76${\tiny $\pm6$} & $\mathbf{100}${\tiny $\pm0$} & $\mathbf{100}${\tiny $\pm0$} \\
        \envname{cube-single-noisy-singletask-task2-v0} & \datasetname{noisy} & $86${\tiny $\pm3$} & $\mathbf{100}${\tiny $\pm0$} & $\mathbf{100}${\tiny $\pm0$} \\
        \envname{cube-single-noisy-singletask-task3-v0} & \datasetname{noisy} & $91${\tiny $\pm7$} & $\mathbf{100}${\tiny $\pm0$} & $\mathbf{100}${\tiny $\pm0$} \\
        \envname{cube-single-noisy-singletask-task4-v0} & \datasetname{noisy} & $45${\tiny $\pm15$} & $\mathbf{100}${\tiny $\pm0$} & $\mathbf{100}${\tiny $\pm0$} \\
        \envname{cube-single-noisy-singletask-task5-v0} & \datasetname{noisy} & $59${\tiny $\pm7$} & $\mathbf{100}${\tiny $\pm0$} & $\mathbf{100}${\tiny $\pm0$} \\
        \midrule
        \envname{cube-double-play-singletask-task1-v0} & \datasetname{clean} & $73${\tiny $\pm29$} & $91${\tiny $\pm7$} & $\mathbf{97}${\tiny $\pm1$} \\
        \envname{cube-double-play-singletask-task2-v0} & \datasetname{clean} & $55${\tiny $\pm7$} & $83${\tiny $\pm7$} & $\mathbf{89}${\tiny $\pm1$} \\
        \envname{cube-double-play-singletask-task3-v0} & \datasetname{clean} & $63${\tiny $\pm12$} & $73${\tiny $\pm7$} & $\mathbf{81}${\tiny $\pm7$} \\
        \envname{cube-double-play-singletask-task4-v0} & \datasetname{clean} & $29${\tiny $\pm10$} & $22${\tiny $\pm4$} & $\mathbf{49}${\tiny $\pm12$} \\
        \envname{cube-double-play-singletask-task5-v0} & \datasetname{clean} & $36${\tiny $\pm21$} & $54${\tiny $\pm13$} & $\mathbf{70}${\tiny $\pm6$} \\
        \midrule
        \envname{cube-double-noisy-singletask-task1-v0} & \datasetname{noisy} & $61${\tiny $\pm15$} & $80${\tiny $\pm3$} & $\mathbf{92}${\tiny $\pm2$} \\
        \envname{cube-double-noisy-singletask-task2-v0} & \datasetname{noisy} & $51${\tiny $\pm12$} & $29${\tiny $\pm14$} & $\mathbf{81}${\tiny $\pm10$} \\
        \envname{cube-double-noisy-singletask-task3-v0} & \datasetname{noisy} & $27${\tiny $\pm5$} & $29${\tiny $\pm3$} & $\mathbf{61}${\tiny $\pm12$} \\
        \envname{cube-double-noisy-singletask-task4-v0} & \datasetname{noisy} & $50${\tiny $\pm3$} & $9${\tiny $\pm2$} & $\mathbf{65}${\tiny $\pm5$} \\
        \envname{cube-double-noisy-singletask-task5-v0} & \datasetname{noisy} & $27${\tiny $\pm6$} & $11${\tiny $\pm6$} & $\mathbf{38}${\tiny $\pm13$} \\
        \bottomrule
    \end{tabular}
    \end{center}
\end{table}

\paragraph{Inverse temperature and target policy smoothing.}
We compare $Q$-only latent optimization (\baselinename{NoER}), target policy smoothing (\baselinename{TPS}), and \ourname{} with different inverse temperatures on \envname{cube-double} task~2 using both \datasetname{clean} and \datasetname{noisy} datasets. \baselinename{TPS} \citep{fujimoto2018addressing} augments \baselinename{NoER} by adding independent zero-mean Gaussian noise with standard deviation $0.2$, clipped to $[-0.5,0.5]$, to the target actions in the critic update. The perturbed actions are then clipped to the action bounds. As shown in \Cref{tab: inverse-temperature-tps}, our default $1/\alpha=10$ outperforms both \baselinename{NoER} and \baselinename{TPS} on both datasets. Strong regularization ($1/\alpha=3$) reduces performance on both datasets, whereas weak regularization ($1/\alpha=30$) performs poorly on the \datasetname{clean} dataset but remains competitive on the \datasetname{noisy} dataset.
One possible explanation is that noisy demonstrations provide broader action coverage, reducing critic artifacts.

\begin{table}[!htbp]
    \begin{center}
    \caption{\textbf{Comparison of \baselinename{NoER}, \baselinename{TPS}, and \ourname{} with different inverse temperatures on \envname{cube-double} task 2.} Success rates (\%) are reported as mean $\pm$ standard deviation across 3 seeds, with 50 evaluation runs per seed. Bold denotes results at or above $95\%$ of the best mean for each dataset.}
    \label{tab: inverse-temperature-tps}
    \begin{tabular}{l|cc|ccccc}
        \toprule
        \multirow{2}{*}[-2pt]{Dataset} & \multirow{2}{*}[-2pt]{\baselinename{NoER}} & \multirow{2}{*}[-2pt]{\baselinename{TPS}} & \multicolumn{5}{c}{\ourname{} ($1/\alpha$)} \\[2pt]
        \cline{4-8}
        & & & \rule{0pt}{\dimexpr\ht\strutbox+2pt\relax}$1/\alpha = 3$ & $1/\alpha = 5$ & $1/\alpha = 10$ & $1/\alpha = 20$ & $1/\alpha = 30$ \\
        \midrule
        \datasetname{clean} & $55${\tiny $\pm7$} & $49${\tiny $\pm24$} & $50${\tiny $\pm14$} & $78${\tiny $\pm19$} & $\mathbf{89}${\tiny $\pm1$} & $79${\tiny $\pm12$} & $30${\tiny $\pm42$} \\
        \datasetname{noisy} & $51${\tiny $\pm12$} & $61${\tiny $\pm17$} & $31${\tiny $\pm19$} & $72${\tiny $\pm15$} & $\mathbf{81}${\tiny $\pm10$} & $\mathbf{83}${\tiny $\pm18$} & $\mathbf{83}${\tiny $\pm19$} \\
        \bottomrule
    \end{tabular}
    \end{center}
\end{table}

\paragraph{Uniform-mixture ablation in action space.}
\ourname{} combines entropy optimization with bounded latent support. To examine a simple alternative in action space, we evaluate \baselinename{QAM+Mix}, which broadens QAM's reference toward a uniform distribution over the action space. With probability $\eta$, each behavior-cloning target is replaced by an action sampled uniformly within $\mathcal A$. Adjoint matching then uses the reference flow learned from these modified targets.
In the ideal KL-regularized formulation, $\eta=0$ recovers QAM, while $\eta=1$ yields action-space maximum-entropy optimization.
Thus, $\eta$ controls how strongly the reference distribution is broadened toward uniform.
On \envname{cube-double-play-singletask-task2-v0} with the \datasetname{clean} dataset, both tested mixing probabilities ($\eta=0.03$ and $\eta=0.1$) reduce \baselinename{QAM}'s mean success rate (\Cref{tab: action-space-entropy}).
The results show that this na\"ive approach does not perform well.

\begin{table}[!htbp]
    \begin{center}
    \caption{\textbf{Uniform-mixture ablation in action space} on \envname{cube-double} task 2 with the \datasetname{clean} dataset. Success rates (\%) are reported as mean $\pm$ standard deviation across 3 seeds, with 50 evaluation runs per seed. Bold denotes results at or above $95\%$ of the best mean.}
    \label{tab: action-space-entropy}
    \begin{tabular}{c|cc|c}
        \toprule
        \baselinename{QAM} & \baselinename{QAM+Mix} ($\eta=0.03$) & \baselinename{QAM+Mix} ($\eta=0.1$) & \ourname{} \\
        \midrule
        $83${\tiny $\pm11$} & $38${\tiny $\pm10$} & $9${\tiny $\pm6$} & $\mathbf{89}${\tiny $\pm1$} \\
        \bottomrule
    \end{tabular}
    \end{center}
\end{table}

\begin{samepage}
\paragraph{Evaluation on harder tasks.}
We compare \ourname{} with all six baselines on the five \envname{puzzle-4x4} tasks using the \datasetname{clean} (100M) dataset. Every run uses two million updates, and we evaluate the final checkpoint for each method and seed. The action chunk length is 5, giving a 25-dimensional action chunk, and the pessimistic backup weight is $\rho=0.5$. Baseline-specific settings are listed in \Cref{tab: env-specific-unique-hyperparams}. As shown in \Cref{tab: puzzle-4x4}, \ourname{} still achieves high performance on these more challenging tasks.

\end{samepage}

\begin{table}[!htbp]
    \begin{center}
    \caption{\textbf{Results on the five \envname{puzzle-4x4} tasks with the \datasetname{clean} dataset (100M).} Success rates (\%) are reported as mean $\pm$ standard deviation across 3 seeds, with 50 evaluation runs per seed. Bold denotes results at or above $95\%$ of the best mean for each task.}
    \label{tab: puzzle-4x4}
    \begin{tabular}{c|cccccc|c}
        \toprule
        Task & \baselinename{FQL} & \baselinename{QAM} & \baselinename{QAM-E} & \baselinename{IFQL} & \baselinename{DSRL} & \baselinename{ReFORM} & \ourname{} \\
        \midrule
        1 & $5${\tiny $\pm4$} & $2${\tiny $\pm2$} & $63${\tiny $\pm14$} & $50${\tiny $\pm13$} & $65${\tiny $\pm24$} & $86${\tiny $\pm6$} & $\mathbf{97}${\tiny $\pm3$} \\
        2 & $13${\tiny $\pm9$} & $0${\tiny $\pm0$} & $74${\tiny $\pm24$} & $2${\tiny $\pm0$} & $17${\tiny $\pm7$} & $6${\tiny $\pm7$} & $\mathbf{87}${\tiny $\pm2$} \\
        3 & $5${\tiny $\pm7$} & $3${\tiny $\pm2$} & $18${\tiny $\pm12$} & $\mathbf{87}${\tiny $\pm1$} & $63${\tiny $\pm41$} & $81${\tiny $\pm7$} & $\mathbf{84}${\tiny $\pm7$} \\
        4 & $1${\tiny $\pm1$} & $0${\tiny $\pm0$} & $33${\tiny $\pm15$} & $32${\tiny $\pm4$} & $31${\tiny $\pm21$} & $33${\tiny $\pm25$} & $\mathbf{60}${\tiny $\pm3$} \\
        5 & $0${\tiny $\pm0$} & $1${\tiny $\pm1$} & $53${\tiny $\pm26$} & $2${\tiny $\pm3$} & $14${\tiny $\pm14$} & $6${\tiny $\pm3$} & $\mathbf{69}${\tiny $\pm18$} \\
        \bottomrule
    \end{tabular}
    \end{center}
\end{table}

\paragraph{Boundary-hit diagnostic for \baselinename{ReFORM}.}
We evaluate the final \baselinename{ReFORM} checkpoints on all five \envname{cube-double} tasks with both \datasetname{clean} and \datasetname{noisy} datasets. At each policy call, we record the fraction of $N=10$ latent Euler proposals satisfying $\|\hat w_{k+1}\|>l$, before applying the reflection correction in \eqref{eq: reform-modified-euler-step}. We average these rates over policy calls within each episode, then over episodes and seeds. \Cref{tab: reform-boundary-hits} reports the results.
The high rates ($97.4\%$--$99.0\%$) show that boundary correction is frequently active, consistent with the proposed static-flow mechanism in \Cref{app: reform-issue}.

\begin{table}[!htbp]
    \begin{center}
    \caption{\textbf{Boundary-hit rates (\%) for \baselinename{ReFORM} on \envname{cube-double}.} Results are averaged across 3 seeds, with 50 evaluation episodes per seed and 10 latent flow steps per policy call.}
    \label{tab: reform-boundary-hits}
    \begin{tabular}{l|ccccc}
        \toprule
        Dataset & Task 1 & Task 2 & Task 3 & Task 4 & Task 5 \\
        \midrule
        \datasetname{clean} & $98.2$ & $99.0$ & $98.9$ & $97.6$ & $98.0$ \\
        \datasetname{noisy} & $97.4$ & $97.7$ & $97.9$ & $97.5$ & $97.9$ \\
        \bottomrule
    \end{tabular}
    \end{center}
\end{table}

\begin{figure}[!htbp]
    \centering
    \includegraphics[width=0.32\linewidth]{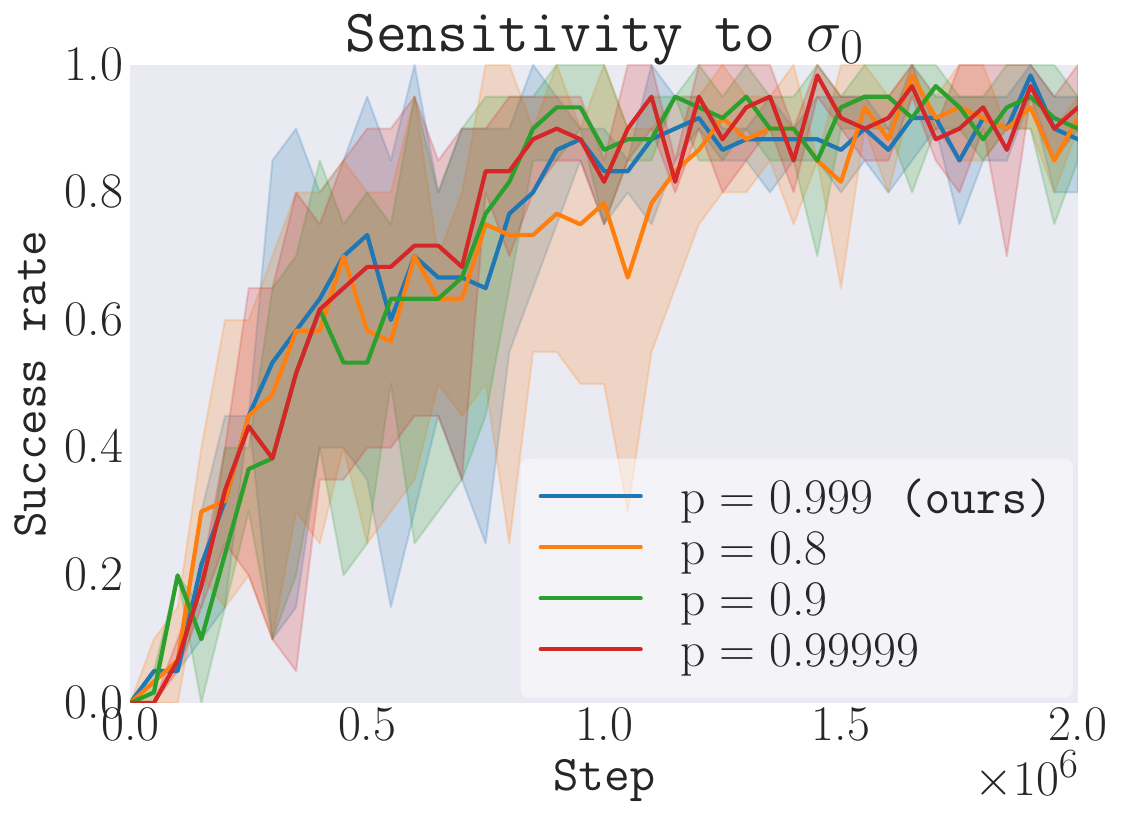}
    \includegraphics[width=0.32\linewidth]{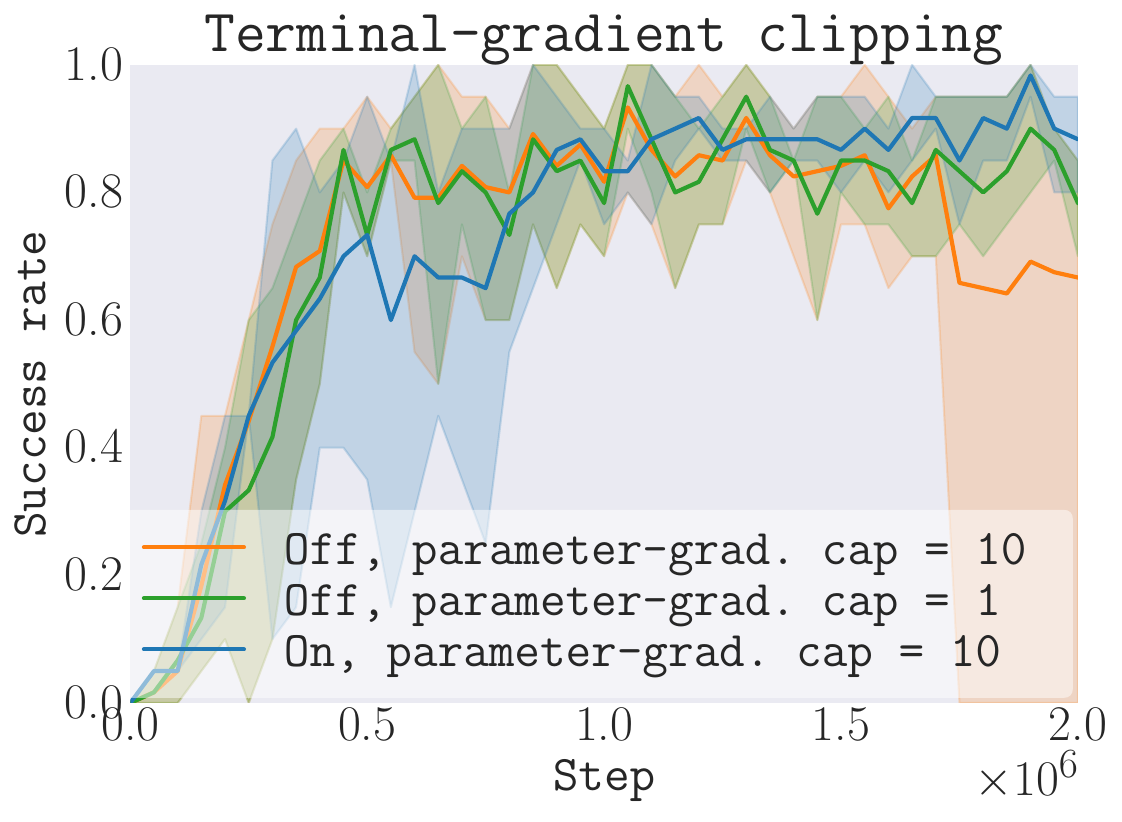}
    \caption{\textbf{Additional ablation studies.} \ourname{} demonstrates robustness to the choice of $\sigma_0$. Furthermore, applying terminal-gradient clipping improves training stability. Reducing the parameter-gradient cap from 10 to 1 yields a comparable stabilizing effect. Lines show mean success rates, and shading spans the minimum and maximum across runs.}
    \label{fig: ablations-additional}
\end{figure}

\paragraph{Alternative latent prior.}\label{app: latent-mixture}
For the decoder-prior ablation in \Cref{fig: ablations}, we use
\[
    q_{\mathsf L}^{\mathrm{mix}}=0.4\,\mathcal U(\mathcal B_l^d)
    +0.3\,(\Pi_{\mathcal B_l^d})_\sharp\mathcal N(0.5e_1,0.1^2I)
    +0.3\,(\Pi_{\mathcal B_l^d})_\sharp\mathcal N(0.4e_d,0.15^2I),
\]
where $e_i$ is the $i$th coordinate vector and Gaussian samples outside the ball are projected onto its boundary. This experiment uses $d=25$ and $l=5$. Only the decoder's training prior changes. The latent policy's entropy reference remains $\LatentUniform{}=\mathcal U(\mathcal B_l^d)$.

\paragraph{Ablation study of the standard deviation of the base distribution.}
We study the sensitivity of \algo{} w.r.t. the standard deviation $\sigma_0$ of $\BaseDist{}$. We vary $\sigma_0$ such that the ball $\mathcal B_l^d$ contains probability mass $p$ under the Gaussian distribution $\mathcal N(0, \sigma_0^2 I)$. We report the results in \Cref{fig: ablations-additional}. The results show that \algo{} is not sensitive to $\sigma_0$.

\paragraph{Ablation study of the gradient clipping technique.}
A known drawback of adjoint matching is its tendency to produce large gradients, which can destabilize the training process \citep{domingo2024adjoint,li2026qam}.
In \Cref{fig: ablations-additional}, we compare three variants on \envname{cube-double} task~2 with \datasetname{clean} data: terminal-gradient clipping disabled with a parameter-gradient cap of 10, disabled with a cap of 1, and enabled with a cap of 10 (our default). Both enabling terminal-gradient clipping and reducing the parameter-gradient cap improve stability over the first variant, with comparable performance.

\paragraph{Detailed results.}
The full results for each algorithm on the 40 main evaluation settings are provided in \Cref{tab: full-result}. Success rates are bolded for algorithms achieving at least $95\%$ of the best performance per task.
For every method and seed, we evaluate the final checkpoint. The main-suite training budgets are 5 million updates for \envname{antmaze-large-navigate}, 8 million for \envname{antmaze-large-explore}, 1 million for \envname{cube-single-noisy}, and 2 million for the remaining settings. The training budgets are selected such that all methods converge.
Each checkpoint is evaluated over 50 episodes, and we report the mean and standard deviation across 3 seeds.
We observe that \ourname{} yields the best overall performance, achieving near-perfect success rates on multiple tasks.

\paragraph{Statistical comparison.}
We pair results across the 40 main task--dataset settings, averaging each method's success rate over its 3 seeds before comparison. For each baseline $j$, a one-sided paired $t$-test evaluates $H_{0,j}:\Delta_j\leq0$, where $\Delta_j$ denotes the expected difference between \ourname{}'s and baseline $j$'s setting-level mean success rates. To test for improvement over every baseline, we use an intersection--union test of $H_0:\bigcup_j\{\Delta_j\leq0\}$ against $H_1:\Delta_j>0$ for all $j$. The global $p$-value is $\max_j p_j=1.35\times10^{-3}$ (\Cref{tab: main-statistics}), rejecting $H_0$ at the $0.05$ level under the paired-test assumptions. This inference treats setting-level differences as independent despite shared environments and datasets, so we interpret it as supporting evidence for aggregate performance on this suite.

\begin{table}[!htbp]
    \begin{center}
    \caption{\textbf{Paired statistical comparisons on the 40 main evaluation settings.} Mean gaps are \ourname{} minus each baseline in percentage points. One-sided $p$-values are computed from unrounded success rates averaged over 3 seeds per setting, with 50 evaluation runs per seed.}
    \label{tab: main-statistics}
    \setlength{\tabcolsep}{4pt}\begin{tabular}{l|cccccc}
        \toprule
        Baseline & \baselinename{FQL} & \baselinename{QAM} & \baselinename{QAM-E} & \baselinename{IFQL} & \baselinename{DSRL} & \baselinename{ReFORM} \\
        \midrule
        Mean gap & $20.02$ & $18.70$ & $17.75$ & $47.50$ & $16.33$ & $11.82$ \\
        $p$-value & $9.08\!\times\!10^{-6}$ & $3.47\!\times\!10^{-4}$ & $6.44\!\times\!10^{-4}$ & $3.22\!\times\!10^{-11}$ & $1.35\!\times\!10^{-3}$ & $3.42\!\times\!10^{-5}$ \\
        \bottomrule
    \end{tabular}
    \end{center}
\end{table}

\paragraph{Training curves.}
\Crefrange{fig: antmaze-large-navigate-training}{fig: scene-noisy-training} show training curves for all algorithms and tasks on both the \datasetname{clean} and \datasetname{noisy} datasets.
Lines show means over 3 seeds, and shading spans the minimum and maximum across seeds. Each point uses 20 evaluation episodes per seed, whereas the reported table results use a separate evaluation of 50 episodes per seed.

\subsection{Code}

We provide code on our project website \url{https://mit-realm.github.io/laser/}.

\begin{table}[!htbp]
	\begin{center}
    \caption{\textbf{Full results on 40 OGBench tasks.} Success rates (\%) are reported as mean $\pm$ standard deviation across 3 seeds, with 50 evaluation episodes per seed. Bold denotes results at or above $95\%$ of the best mean in each setting, following \cite{ogbench_park2025}. We omit the \envname{-singletask} tags from the task names to save space. }
    \label{tab: full-result}
    \scriptsize
    \begin{tabular}{l|c|cccccc|c}
        \toprule
        Task & Dataset & \baselinename{FQL} & \baselinename{QAM} & \baselinename{QAM-E} & \baselinename{IFQL} & \baselinename{DSRL} & \baselinename{ReFORM} & \ourname{} \\
        \midrule
        \envname{antmaze-large-navigate-task1-v0} & \datasetname{clean} & $64${\tiny $\pm45$} & $\mathbf{96}${\tiny $\pm3$} & $\mathbf{93}${\tiny $\pm2$} & $79${\tiny $\pm13$} & $90${\tiny $\pm3$} & $91${\tiny $\pm2$} & $\mathbf{95}${\tiny $\pm1$} \\
			\envname{antmaze-large-navigate-task2-v0} & \datasetname{clean} & $31${\tiny $\pm43$} & $60${\tiny $\pm42$} & $\mathbf{94}${\tiny $\pm3$} & $65${\tiny $\pm5$} & $81${\tiny $\pm5$} & $78${\tiny $\pm2$} & $\mathbf{89}${\tiny $\pm2$} \\
			\envname{antmaze-large-navigate-task3-v0} & \datasetname{clean} & $71${\tiny $\pm12$} & $85${\tiny $\pm3$} & $\mathbf{97}${\tiny $\pm1$} & $85${\tiny $\pm9$} & $\mathbf{94}${\tiny $\pm3$} & $74${\tiny $\pm6$} & $\mathbf{92}${\tiny $\pm3$} \\
			\envname{antmaze-large-navigate-task4-v0} & \datasetname{clean} & $33${\tiny $\pm41$} & $\mathbf{93}${\tiny $\pm1$} & $\mathbf{91}${\tiny $\pm1$} & $31${\tiny $\pm43$} & $75${\tiny $\pm14$} & $69${\tiny $\pm4$} & $85${\tiny $\pm1$} \\
			\envname{antmaze-large-navigate-task5-v0} & \datasetname{clean} & $88${\tiny $\pm6$} & $\mathbf{95}${\tiny $\pm1$} & $\mathbf{93}${\tiny $\pm4$} & $83${\tiny $\pm2$} & $\mathbf{91}${\tiny $\pm4$} & $83${\tiny $\pm1$} & $\mathbf{91}${\tiny $\pm1$} \\
			\midrule
			\envname{antmaze-large-explore-task1-v0} & \datasetname{noisy} & $\mathbf{97}${\tiny $\pm1$} & $0${\tiny $\pm0$} & $0${\tiny $\pm0$} & $50${\tiny $\pm36$} & $0${\tiny $\pm0$} & $91${\tiny $\pm4$} & $\mathbf{97}${\tiny $\pm2$} \\
			\envname{antmaze-large-explore-task2-v0} & \datasetname{noisy} & $\mathbf{97}${\tiny $\pm1$} & $0${\tiny $\pm0$} & $0${\tiny $\pm0$} & $3${\tiny $\pm2$} & $0${\tiny $\pm0$} & $88${\tiny $\pm9$} & $87${\tiny $\pm2$} \\
			\envname{antmaze-large-explore-task3-v0} & \datasetname{noisy} & $\mathbf{100}${\tiny $\pm0$} & $0${\tiny $\pm0$} & $0${\tiny $\pm0$} & $\mathbf{95}${\tiny $\pm1$} & $0${\tiny $\pm0$} & $\mathbf{100}${\tiny $\pm0$} & $\mathbf{98}${\tiny $\pm3$} \\
			\envname{antmaze-large-explore-task4-v0} & \datasetname{noisy} & $\mathbf{89}${\tiny $\pm3$} & $0${\tiny $\pm0$} & $0${\tiny $\pm0$} & $36${\tiny $\pm13$} & $0${\tiny $\pm0$} & $82${\tiny $\pm9$} & $\mathbf{89}${\tiny $\pm2$} \\
			\envname{antmaze-large-explore-task5-v0} & \datasetname{noisy} & $\mathbf{94}${\tiny $\pm6$} & $0${\tiny $\pm0$} & $0${\tiny $\pm0$} & $47${\tiny $\pm34$} & $0${\tiny $\pm0$} & $\mathbf{93}${\tiny $\pm2$} & $\mathbf{92}${\tiny $\pm3$} \\
			\midrule
			\envname{cube-single-play-task1-v0} & \datasetname{clean} & $\mathbf{95}${\tiny $\pm4$} & $94${\tiny $\pm3$} & $\mathbf{95}${\tiny $\pm4$} & $79${\tiny $\pm2$} & $95${\tiny $\pm5$} & $\mathbf{99}${\tiny $\pm2$} & $\mathbf{100}${\tiny $\pm0$} \\
			\envname{cube-single-play-task2-v0} & \datasetname{clean} & $\mathbf{97}${\tiny $\pm1$} & $\mathbf{99}${\tiny $\pm1$} & $\mathbf{99}${\tiny $\pm2$} & $70${\tiny $\pm5$} & $90${\tiny $\pm7$} & $\mathbf{96}${\tiny $\pm3$} & $\mathbf{100}${\tiny $\pm0$} \\
			\envname{cube-single-play-task3-v0} & \datasetname{clean} & $\mathbf{99}${\tiny $\pm1$} & $\mathbf{100}${\tiny $\pm0$} & $\mathbf{99}${\tiny $\pm1$} & $82${\tiny $\pm7$} & $\mathbf{97}${\tiny $\pm4$} & $\mathbf{98}${\tiny $\pm3$} & $\mathbf{100}${\tiny $\pm0$} \\
			\envname{cube-single-play-task4-v0} & \datasetname{clean} & $90${\tiny $\pm4$} & $91${\tiny $\pm2$} & $85${\tiny $\pm4$} & $82${\tiny $\pm3$} & $\mathbf{98}${\tiny $\pm2$} & $87${\tiny $\pm2$} & $\mathbf{99}${\tiny $\pm1$} \\
			\envname{cube-single-play-task5-v0} & \datasetname{clean} & $81${\tiny $\pm2$} & $\mathbf{89}${\tiny $\pm4$} & $85${\tiny $\pm4$} & $75${\tiny $\pm6$} & $85${\tiny $\pm3$} & $86${\tiny $\pm13$} & $\mathbf{93}${\tiny $\pm5$} \\
			\midrule
			\envname{cube-single-noisy-task1-v0} & \datasetname{noisy} & $\mathbf{100}${\tiny $\pm0$} & $\mathbf{100}${\tiny $\pm0$} & $\mathbf{100}${\tiny $\pm0$} & $1${\tiny $\pm1$} & $93${\tiny $\pm6$} & $\mathbf{100}${\tiny $\pm0$} & $\mathbf{100}${\tiny $\pm0$} \\
			\envname{cube-single-noisy-task2-v0} & \datasetname{noisy} & $\mathbf{100}${\tiny $\pm0$} & $\mathbf{100}${\tiny $\pm0$} & $\mathbf{99}${\tiny $\pm1$} & $7${\tiny $\pm6$} & $92${\tiny $\pm9$} & $\mathbf{100}${\tiny $\pm0$} & $\mathbf{100}${\tiny $\pm0$} \\
			\envname{cube-single-noisy-task3-v0} & \datasetname{noisy} & $\mathbf{100}${\tiny $\pm0$} & $\mathbf{100}${\tiny $\pm0$} & $\mathbf{100}${\tiny $\pm0$} & $85${\tiny $\pm9$} & $53${\tiny $\pm25$} & $\mathbf{100}${\tiny $\pm0$} & $\mathbf{100}${\tiny $\pm0$} \\
			\envname{cube-single-noisy-task4-v0} & \datasetname{noisy} & $\mathbf{100}${\tiny $\pm0$} & $\mathbf{100}${\tiny $\pm0$} & $\mathbf{100}${\tiny $\pm0$} & $0${\tiny $\pm0$} & $91${\tiny $\pm4$} & $\mathbf{100}${\tiny $\pm0$} & $\mathbf{100}${\tiny $\pm0$} \\
			\envname{cube-single-noisy-task5-v0} & \datasetname{noisy} & $\mathbf{100}${\tiny $\pm0$} & $\mathbf{100}${\tiny $\pm0$} & $\mathbf{99}${\tiny $\pm2$} & $0${\tiny $\pm0$} & $91${\tiny $\pm5$} & $\mathbf{100}${\tiny $\pm0$} & $\mathbf{100}${\tiny $\pm0$} \\
			\midrule
			\envname{cube-double-play-task1-v0} & \datasetname{clean} & $78${\tiny $\pm2$} & $\mathbf{96}${\tiny $\pm2$} & $\mathbf{97}${\tiny $\pm1$} & $19${\tiny $\pm2$} & $\mathbf{95}${\tiny $\pm1$} & $91${\tiny $\pm7$} & $\mathbf{97}${\tiny $\pm1$} \\
			\envname{cube-double-play-task2-v0} & \datasetname{clean} & $52${\tiny $\pm3$} & $83${\tiny $\pm11$} & $83${\tiny $\pm1$} & $15${\tiny $\pm4$} & $63${\tiny $\pm4$} & $83${\tiny $\pm7$} & $\mathbf{89}${\tiny $\pm1$} \\
			\envname{cube-double-play-task3-v0} & \datasetname{clean} & $40${\tiny $\pm6$} & $69${\tiny $\pm11$} & $75${\tiny $\pm6$} & $9${\tiny $\pm7$} & $63${\tiny $\pm17$} & $73${\tiny $\pm7$} & $\mathbf{81}${\tiny $\pm7$} \\
			\envname{cube-double-play-task4-v0} & \datasetname{clean} & $7${\tiny $\pm3$} & $21${\tiny $\pm1$} & $20${\tiny $\pm4$} & $1${\tiny $\pm1$} & $17${\tiny $\pm5$} & $22${\tiny $\pm4$} & $\mathbf{49}${\tiny $\pm12$} \\
			\envname{cube-double-play-task5-v0} & \datasetname{clean} & $37${\tiny $\pm10$} & $\mathbf{81}${\tiny $\pm10$} & $73${\tiny $\pm2$} & $8${\tiny $\pm3$} & $73${\tiny $\pm1$} & $54${\tiny $\pm13$} & $70${\tiny $\pm6$} \\
			\midrule
			\envname{cube-double-noisy-task1-v0} & \datasetname{noisy} & $53${\tiny $\pm17$} & $89${\tiny $\pm4$} & $\mathbf{97}${\tiny $\pm2$} & $1${\tiny $\pm1$} & $87${\tiny $\pm5$} & $80${\tiny $\pm3$} & $\mathbf{92}${\tiny $\pm2$} \\
			\envname{cube-double-noisy-task2-v0} & \datasetname{noisy} & $5${\tiny $\pm2$} & $74${\tiny $\pm3$} & $\mathbf{80}${\tiny $\pm4$} & $0${\tiny $\pm0$} & $\mathbf{79}${\tiny $\pm11$} & $29${\tiny $\pm14$} & $\mathbf{81}${\tiny $\pm10$} \\
			\envname{cube-double-noisy-task3-v0} & \datasetname{noisy} & $5${\tiny $\pm2$} & $55${\tiny $\pm5$} & $53${\tiny $\pm6$} & $0${\tiny $\pm0$} & $\mathbf{60}${\tiny $\pm5$} & $29${\tiny $\pm3$} & $\mathbf{61}${\tiny $\pm12$} \\
			\envname{cube-double-noisy-task4-v0} & \datasetname{noisy} & $1${\tiny $\pm1$} & $43${\tiny $\pm13$} & $51${\tiny $\pm2$} & $1${\tiny $\pm1$} & $58${\tiny $\pm18$} & $9${\tiny $\pm2$} & $\mathbf{65}${\tiny $\pm5$} \\
			\envname{cube-double-noisy-task5-v0} & \datasetname{noisy} & $2${\tiny $\pm3$} & $35${\tiny $\pm4$} & $30${\tiny $\pm4$} & $0${\tiny $\pm0$} & $\mathbf{40}${\tiny $\pm11$} & $11${\tiny $\pm6$} & $\mathbf{38}${\tiny $\pm13$} \\
			\midrule
			\envname{scene-play-task1-v0} & \datasetname{clean} & $\mathbf{100}${\tiny $\pm0$} & $\mathbf{100}${\tiny $\pm0$} & $\mathbf{100}${\tiny $\pm0$} & $\mathbf{99}${\tiny $\pm1$} & $\mathbf{100}${\tiny $\pm0$} & $\mathbf{100}${\tiny $\pm0$} & $\mathbf{100}${\tiny $\pm0$} \\
			\envname{scene-play-task2-v0} & \datasetname{clean} & $\mathbf{100}${\tiny $\pm0$} & $\mathbf{100}${\tiny $\pm0$} & $\mathbf{99}${\tiny $\pm1$} & $3${\tiny $\pm2$} & $\mathbf{100}${\tiny $\pm0$} & $\mathbf{100}${\tiny $\pm0$} & $\mathbf{100}${\tiny $\pm0$} \\
			\envname{scene-play-task3-v0} & \datasetname{clean} & $93${\tiny $\pm5$} & $93${\tiny $\pm5$} & $94${\tiny $\pm3$} & $80${\tiny $\pm4$} & $\mathbf{99}${\tiny $\pm1$} & $93${\tiny $\pm8$} & $\mathbf{100}${\tiny $\pm0$} \\
			\envname{scene-play-task4-v0} & \datasetname{clean} & $66${\tiny $\pm13$} & $16${\tiny $\pm19$} & $7${\tiny $\pm4$} & $3${\tiny $\pm1$} & $\mathbf{100}${\tiny $\pm0$} & $60${\tiny $\pm42$} & $58${\tiny $\pm29$} \\
			\envname{scene-play-task5-v0} & \datasetname{clean} & $\mathbf{13}${\tiny $\pm9$} & $0${\tiny $\pm0$} & $0${\tiny $\pm0$} & $0${\tiny $\pm0$} & $0${\tiny $\pm0$} & $0${\tiny $\pm0$} & $0${\tiny $\pm0$} \\
			\midrule
			\envname{scene-noisy-task1-v0} & \datasetname{noisy} & $86${\tiny $\pm12$} & $\mathbf{100}${\tiny $\pm0$} & $\mathbf{100}${\tiny $\pm0$} & $\mathbf{99}${\tiny $\pm2$} & $\mathbf{100}${\tiny $\pm0$} & $71${\tiny $\pm17$} & $\mathbf{100}${\tiny $\pm0$} \\
			\envname{scene-noisy-task2-v0} & \datasetname{noisy} & $70${\tiny $\pm8$} & $\mathbf{98}${\tiny $\pm2$} & $\mathbf{98}${\tiny $\pm2$} & $0${\tiny $\pm0$} & $\mathbf{96}${\tiny $\pm3$} & $\mathbf{97}${\tiny $\pm4$} & $\mathbf{100}${\tiny $\pm0$} \\
			\envname{scene-noisy-task3-v0} & \datasetname{noisy} & $0${\tiny $\pm0$} & $25${\tiny $\pm6$} & $39${\tiny $\pm16$} & $44${\tiny $\pm4$} & $\mathbf{100}${\tiny $\pm0$} & $45${\tiny $\pm27$} & $80${\tiny $\pm27$} \\
			\envname{scene-noisy-task4-v0} & \datasetname{noisy} & $0${\tiny $\pm0$} & $8${\tiny $\pm11$} & $1${\tiny $\pm1$} & $1${\tiny $\pm1$} & $37${\tiny $\pm39$} & $0${\tiny $\pm0$} & $\mathbf{67}${\tiny $\pm31$} \\
			\envname{scene-noisy-task5-v0} & \datasetname{noisy} & $\mathbf{0}${\tiny $\pm0$} & $\mathbf{0}${\tiny $\pm0$} & $\mathbf{0}${\tiny $\pm0$} & $\mathbf{0}${\tiny $\pm0$} & $\mathbf{0}${\tiny $\pm0$} & $\mathbf{0}${\tiny $\pm0$} & $\mathbf{0}${\tiny $\pm0$} \\
        \bottomrule
    \end{tabular}
	\end{center}
\end{table}

\begin{figure}[!htbp]
    \centering
    \includegraphics[width=\linewidth]{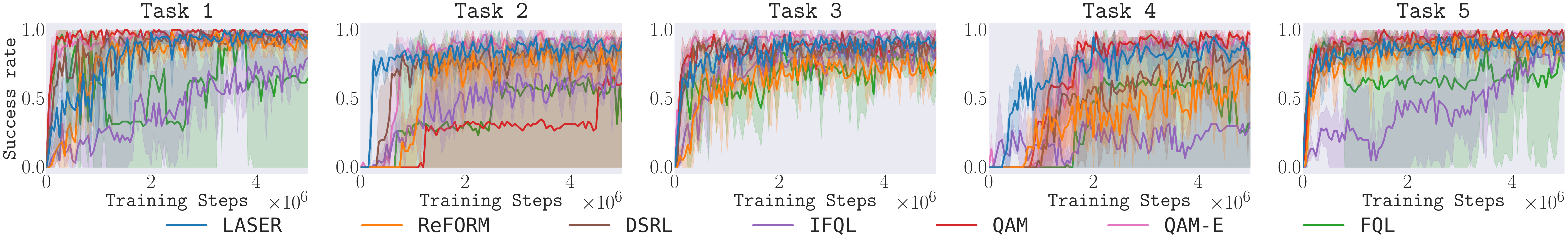}
    \caption{Training curves for the \envname{antmaze-large} environment with the \datasetname{clean} dataset.}
    \label{fig: antmaze-large-navigate-training}
\end{figure}
\begin{figure}[!htbp]
    \centering
    \includegraphics[width=\linewidth]{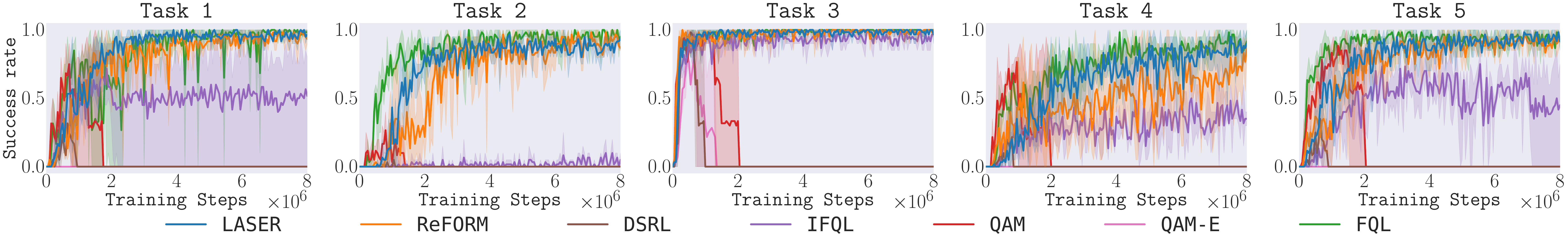}
    \caption{Training curves for the \envname{antmaze-large} environment with the \datasetname{noisy} dataset.}
    \label{fig: antmaze-large-explore-training}
\end{figure}
\begin{figure}[!htbp]
    \centering
    \includegraphics[width=\linewidth]{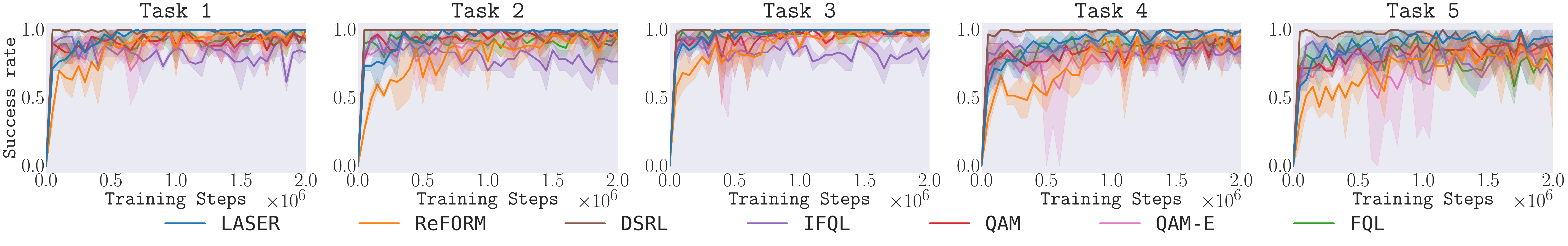}
    \caption{Training curves for the \envname{cube-single} environment with the \datasetname{clean} dataset.}
    \label{fig: cube-single-play-training}
\end{figure}
\begin{figure}[!htbp]
    \centering
    \includegraphics[width=\linewidth]{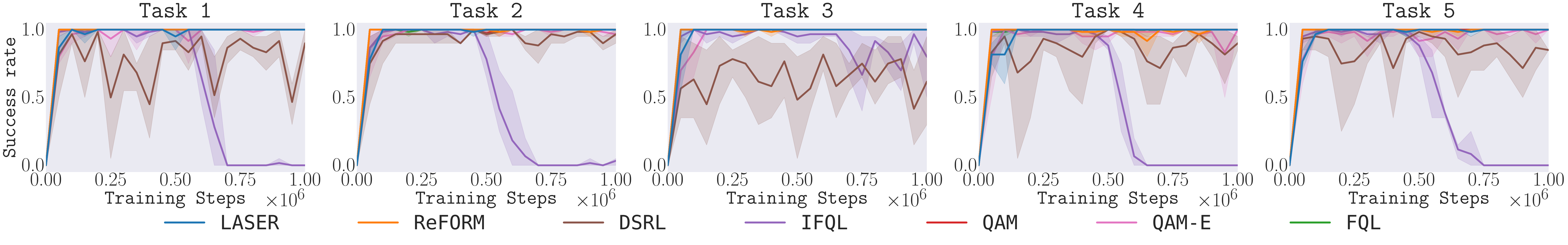}
    \caption{Training curves for the \envname{cube-single} environment with the \datasetname{noisy} dataset.}
    \label{fig: cube-single-noisy-training}
\end{figure}
\begin{figure}[!htbp]
    \centering
    \includegraphics[width=\linewidth]{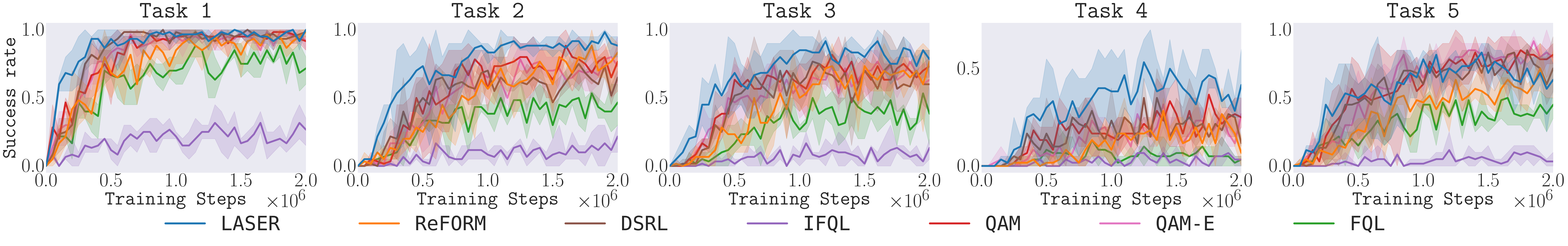}
    \caption{Training curves for the \envname{cube-double} environment with the \datasetname{clean} dataset.}
    \label{fig: cube-double-play-training}
\end{figure}
\begin{figure}[!htbp]
    \centering
    \includegraphics[width=\linewidth]{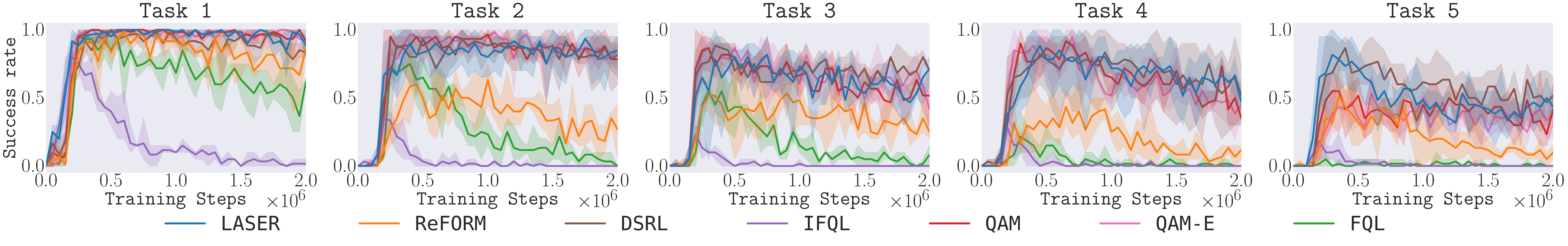}
    \caption{Training curves for the \envname{cube-double} environment with the \datasetname{noisy} dataset.}
    \label{fig: cube-double-noisy-training}
\end{figure}
\begin{figure}[!htbp]
    \centering
    \includegraphics[width=\linewidth]{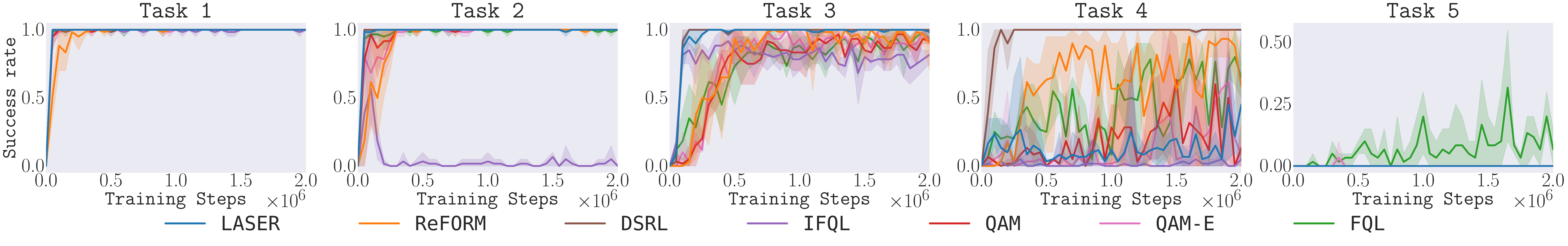}
    \caption{Training curves for the \envname{scene} environment with the \datasetname{clean} dataset.}
    \label{fig: scene-play-training}
\end{figure}
\begin{figure}[!htbp]
    \centering
    \includegraphics[width=\linewidth]{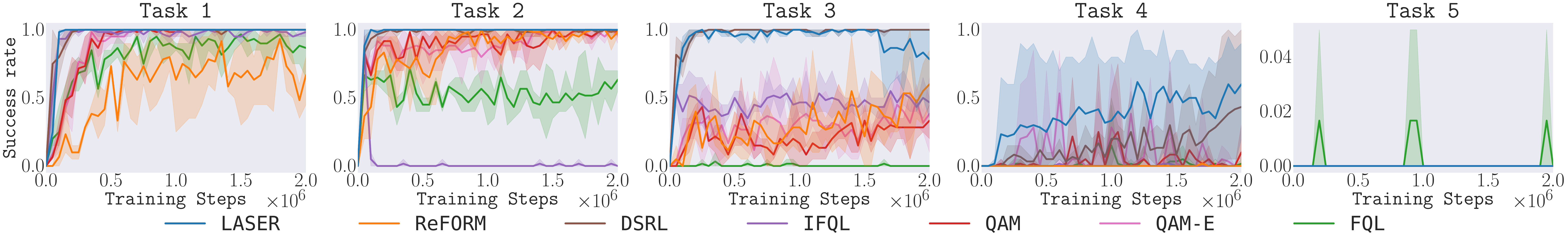}
    \caption{Training curves for the \envname{scene} environment with the \datasetname{noisy} dataset.}
    \label{fig: scene-noisy-training}
\end{figure}

\end{document}